\newtheorem{theorem}{Theorem}
\newtheorem{lemma}{Lemma}
\newtheorem{remark}{Remark}
\newtheorem{corollary}{Corollary}
\newtheorem{observation}{Observation}
\theoremstyle{definition}
\definecolor{custom_blue}{HTML}{1F77B4}
\definecolor{custom_pink}{HTML}{E377C2}
\definecolor{custom_orange}{HTML}{FF7F0E}
\definecolor{custom_purple}{HTML}{9467BD}
\definecolor{custom_green}{HTML}{2CA02C}
\definecolor{custom_red}{HTML}{D62728}
\definecolor{custom_brown}{HTML}{8C564B}
\pgfplotsset{compat=1.17}
\newcommand{\blue}{\raisebox{2pt}{\tikz{\draw[custom_blue, solid, line width=2.3pt](0,0) -- (5mm,0);}}}
\newcommand{\pink}{\raisebox{2pt}{\tikz{\draw[custom_pink, solid, line width=2.3pt](0,0) -- (5mm,0);}}}
\newcommand{\orange}{\raisebox{2pt}{\tikz{\draw[custom_orange, solid, line width=2.3pt](0,0) -- (5mm,0);}}}
\newcommand{\purple}{\raisebox{2pt}{\tikz{\draw[custom_purple, solid, line width=2.3pt](0,0) -- (5mm,0);}}}
\newcommand{\green}{\raisebox{2pt}{\tikz{\draw[custom_green, solid, line width=2.3pt](0,0) -- (5mm,0);}}}
\newcommand{\brown}{\raisebox{2pt}{\tikz{\draw[custom_brown, solid, line width=2.3pt](0,0) -- (5mm,0);}}}
\newcommand*\rot{\rotatebox{90}}
\newcommand{\orcid}[1]{\href{https://orcid.org/#1}{\textcolor[HTML]{A6CE39}{\aiOrcid}}}
\title{Actor Prioritized Experience Replay}
\date{} 					
\author{Baturay Saglam, Furkan B.~Mutlu, Dogan C.~Cicek, Suleyman S.~Kozat \\
	Department of Electrical and Electronics Engineering\\
	Bilkent University, 06800 Bilkent, Ankara, Turkey \\
	\texttt{\{baturay,burak.mutlu,cicek,kozat\}@ee.bilkent.edu.tr} \\
}
\begin{document}
\maketitle

\begin{abstract}
    A widely-studied deep reinforcement learning (RL) technique known as Prioritized Experience Replay (PER) allows agents to learn from transitions sampled with non-uniform probability proportional to their temporal-difference (TD) error. Although it has been shown that PER is one of the most crucial components for the overall performance of deep RL methods in discrete action domains, many empirical studies indicate that it considerably underperforms actor-critic algorithms in continuous control. We theoretically show that actor networks cannot be effectively trained with transitions that have large TD errors. As a result, the approximate policy gradient computed under the Q-network diverges from the actual gradient computed under the optimal Q-function. Motivated by this, we introduce a novel experience replay sampling framework for actor-critic methods, which also regards issues with stability and recent findings behind the poor empirical performance of PER. The introduced algorithm suggests a new branch of improvements to PER and schedules effective and efficient training for both actor and critic networks. An extensive set of experiments verifies our theoretical claims and demonstrates that the introduced method significantly outperforms the competing approaches and obtains state-of-the-art results over the standard off-policy actor-critic algorithms. 
\end{abstract}

\keywords{deep reinforcement learning \and off-policy learning \and prioritized experience replay \and actor-critic algorithms}

\section{Introduction}
In off-policy deep reinforcement learning (RL), the experience replay buffer \citep{exp_replay}, which contains experiences that different policies may collect, is a vital ingredient of policy optimization \citep{jair_2}. Experience replay can stabilize and improve policy optimization by storing many previous experiences (or transitions) in a buffer and reusing them multiple times to perform gradient steps on policies and value functions approximated by deep neural networks \citep{sutton2018reinforcement}. Although the initial proposal of experience replay considered uniform sampling from the buffer, various sampling methods, e.g., \citep{per,ners,maper}, were shown to improve the data efficiency by calculating priority scores for the experiences. 

The use of priority-based non-uniform sampling in deep RL stems from a technique known as Prioritized Experience Replay (PER) \citep{per}, in which high error transitions are sampled with higher likelihood, allowing for faster learning and reward propagation by focusing on the most crucial data. In an ablation study, PER was shown to be the most key enhancement for the overall performance of the Deep Q-Network (DQN) algorithm \citep{dqn} compared to other improvements \citep{rainbow}. Although the motivation of PER is intuitive for learning in discrete action spaces, e.g., The Arcade Learning Environment \citep{jair_1}, many empirical studies showed that it substantially decreases the performance of RL agents in continuous action domains, resulting in suboptimal or random behavior \citep{lap, ners, maper}. Unfortunately, the poor performance of PER in continuous domains lacks a critical theoretical foundation. In this study, we develop an analysis that enables us to understand why PER cannot be effectively combined with actor-critic algorithms in continuous control and suggest novel modifications to PER to improve the empirical performance of the algorithm.

In continuous action spaces, the critic cannot be used to select actions due to the intractable, i.e., infinitely many possible actions \citep{sutton2018reinforcement}. Actor-critic methods can overcome this by utilizing a separate function, called \textit{actor}, to choose actions on the observed states. When combined with PER, the actor and critic networks are trained with transitions corresponding to large temporal-difference (TD) errors. TD error is a reasonable proxy that stands as the loss of the critic in Q-learning \citep{q_learning} and indicates the uncertainty and knowledge of the critic on the collected transitions in terms of the bootstrapped expected future rewards \citep{prioritized_sweeping}. Hence, in the basis of the TD-learning \citep{sutton88}, a large TD error implies that the critic has little knowledge and high uncertainty about the experiences \citep{sutton2018reinforcement}. However, we claim that actors cannot be effectively trained with experiences that the critic does not know their future returns well. An intuitive analogy may be that it is infeasible to expect a student to learn a subject well if the teacher has little knowledge about it. Our main theoretical contributions in this work justify our claim that if an actor-critic algorithm is trained with a transition corresponding to a large TD error, the approximate policy gradient, i.e., computed under the Q-network, can significantly diverge from the actual gradient, i.e., computed under the optimal Q-function, for the transition in interest or the subsequent transition. This finding can be used to improve the performance of PER by training the actor with different experiences and facilitating the design of novel prioritized
methods. Discoveries of this study are summarized as follows:
\begin{itemize}
    \item \textbf{Actor networks should be trained with low TD error transitions:} The critical implication of this finding is that the policy gradient, either stochastic or deterministic, that depends on the critic cannot be effectively computed using transitions on which the critic has high uncertainty. In particular, we find that a large TD error can correspond to a high Q-value estimation error for some transitions. Such error may cause the approximate policy gradient to diverge from the actual gradient under the optimal Q-function. To the best of our knowledge, this is the primary reason behind the poor performance of PER in standard off-policy actor-critic algorithms and we are the first to show it theoretically. Ultimately, this can only be overcome when the actor is trained with low TD error transitions in the TD error based prioritized sampling. 

    \item \textbf{Actor and critic networks should be optimized with uniformly sampled transitions for a fraction of the batch size:} Training actor and critic with completely different transitions, e.g., low and high TD error, violates the actor-critic theory \citep{konda} since critic parameters always depend on the actor parameters as the actions selected by the actor are used in the updates. Our empirical studies show that using a set of uniformly sampled transitions for a fraction of the batch size is extremely important in actor-critic training and ensures stability in learning.
    
    \item \textbf{Loss functions should be modified to prevent outlier bias leakage in the prioritized sampling:} While the combination of the latter two findings is theoretically and empirically favorable, prioritized and uniform sampling should not be considered in isolation from the loss function as outlier biased transitions may still leak during sampling \citep{lap}. Notably, we demonstrate that corrections to PER cannot reach their maximum potential unless the mean-squared error (MSE) in the Q-network training is corrected. Thus, we leverage the prominent results of \cite{lap} in our modifications to PER. 
\end{itemize}

In this paper, we introduce a novel experience replay prioritization framework, the Loss-Adjusted Approximate Actor Prioritized Experience Replay (LA3P) algorithm, to overcome the mentioned limitations of PER in continuous control. LA3P effectively adapts PER to actor-critic algorithms in continuous action spaces by training the actor network with transitions that the critic has reliable knowledge of. Moreover, our algorithm considers the issues with stability and traditional actor-critic theory by not completely separating the actor and critic networks and adjusting the loss function accordingly. We evaluate LA3P on challenging OpenAI Gym \citep{gym} continuous control benchmarks and find that our method outperforms the competing PER correction algorithms by a large margin and obtains significant gains over PER and state-of-the-art in all of the tasks tested. Furthermore, an extensive set of ablation studies verifies that each introduced modification to PER is essential to maintain the overall performance in actor-critic algorithms. All of our code and results are open-sourced and provided in the GitHub repository\footnote{\url{https://github.com/baturaysaglam/LA3P}\label{our_repo}}.

\section{Related Work}
Initial studies in experience prioritization consider prioritized sweeping for value iteration to boost the learning speed and effectively use the computational resources \citep{prioritized_sweeping,sweeping_nips}. \textit{Prioritized sweeping} is a model-based reinforcement learning approach that attempts to focus an agent's limited computing resources to get a reasonable analysis of the environment's state values. It is also utilized in modern applications of RL to perform importance sampling over the collected trajectories \citep{is_off_pol_pred} and learning from demonstrations \citep{deep_q_learning_from_demons}. 

Prioritized Experience Replay, which we investigate in depth in later sections, has been one of the most remarkable improvements to the DQN algorithm and its successors and is employed in many learning algorithms along with additional improvements such as Rainbow \citep{rainbow}, distributed PER \citep{distributed_per}, and distributional policy gradients \citep{distributed_pg}. Modifications on PER have also been proposed, e.g., prioritizing the sequences of transitions \citep{reactor} or optimization of the prioritization scheme \citep{ero}. A counterpart to the mentioned experience replay approaches is determining which transitions to favor or forget \citep{remember_and_forget_er}. Moreover, the effects originating from the composition and size of the experience replay buffers have also been studied by \cite{selected_er} and \cite{allerton}. The discussed methods usually consider model-based learning in discrete action domains, aiming at which sources to focus on or which features to select and store. Conversely, we focus on the problems that originate from the prioritized sampling in continuous control model-free deep RL to constitute an approach for deciding which experiences to replay or sample. 

Lately, the learning-based approaches through deep function approximators have been proposed to determine which experiences to sample, independent of the experience replay buffer composition and without deciding which transitions to store. \cite{ero} train a multi-layer perceptron whose input is the concatenation of reward, time step, and TD error. The network outputs a Bernoulli distribution to assign priorities to each sample within the replay buffer. On top of the neural sampler proposed by \cite{ero}, \cite{ners} also considers TD error, similar to PER. They train the sampler network through a novel REINFORCE \citep{reinforce} trick that measures the improvement in the evaluation rewards if the agent is trained with the priorities produced by the sampler network and updates it accordingly. In contrast to these learnable sampling strategies, we introduce corrections to the Prioritized Experience Replay in a rule-based manner. 

Recently, corrections to PER have been extensively investigated by \cite{lap} and \cite{maper}. First, \cite{lap} addressed that any loss function combined with non-uniform probability may be converted into a new uniformly sampled loss function counterpart with the same expected gradient. By using this finding, \cite{lap} completely replaces the loss in PER with such a modified loss function, which has been shown to have no effect on empirical performance. By correcting its uniformly sampled loss function equivalent, this connection provides a new branch of PER improvements called Loss Adjusted Prioritized (LAP) Experience Replay \citep{lap}. We broadly investigate LAP in the later sections. Secondly, \cite{maper} claim that sampling from the replay buffer depending highly on the TD error (or Q-network's error) may be ineffective due to the under- or overestimation of the Q-values resulting from the deep function approximators and bootstrapping. For this reason, they proposed to learn auxiliary features driven from the components in model-based RL to calculate the scores of experiences. The proposed method, Model-Augmented PER (MaPER) \citep{maper}, uses the critic network to improve the effect of curriculum learning for predicting Q-values with minimal memory and computational overhead compared to vanilla PER. Ultimately, we include the theoretical results of \cite{lap} in our methodology, while we also compare our method against uniform sampling, PER, LAP, and MaPER in our empirical studies.

\section{Technical Preliminaries}
\subsection{Deep Reinforcement Learning}
This study considers the standard RL setup, described by \cite{jair_3} and \cite{sutton2018reinforcement}, in which an agent interacts with an environment to collect rewards. At every discrete time step $t$, the agent observes a state $s$ and chooses an action $a$. Depending on its action selection, the agent receives a reward $r$ and observes the next state $s^{\prime}$. The RL paradigm is represented by a finite Markov Decision Process consisting of a 5-tuple $(\mathcal{S}, \mathcal{A}, \mathcal{R}, P, \gamma)$, with state space $\mathcal{S}$, action space $\mathcal{A}$, a deterministic or stochastic reward function $\mathcal{R}$, the environment dynamics model $P$, and the discount factor $\gamma \in [0, 1)$.

The performance of a policy $\pi$ is assessed under the action-value function (Q-function or critic) $Q^{\pi}$, which represents the expected sum of discounted rewards while following the policy $\pi$ after performing the action $a$ in state $s$:
\begin{equation}
    Q^{\pi}(s, a) = \mathbb{E}_{\pi}[\sum_{t = 0}^{\infty}\gamma^{t}r_{t + 1}|s_{0} = s, a_{0} = a].  
\end{equation}
The action-value function is determined through the Bellman equation \citep{belmann}:
\begin{equation}
    Q^{\pi}(s, a) = \mathbb{E}_{r, s^{\prime} \sim P, a^{\prime} \sim \pi}[r + \gamma Q^{\pi}(s^{\prime}, a^{\prime})],
\end{equation}
where $a^{\prime}$ is the next action selected by the policy on the observed next state $s^{\prime}$.

In deep RL, the critic is approximated by a deep neural network $Q_{\theta}$ with parameters $\theta$. Then, the Q-learning with deep function approximators transforms into the standard DQN algorithm. Given a transition tuple $\tau = (s, a, r, s^{\prime})$, DQN is trained by minimizing the loss $\delta_{\theta}(\tau)$ based on the temporal-difference error corresponding to the Q-network $Q_{\theta}$. The TD-learning algorithm is an update rule based on the Bellman equation, which defines a fundamental relationship used to learn the action-value function by bootstrapping from the value estimate of the current state-action pair $(s, a)$ to the subsequent state-action pair $(s^{\prime}, a^{\prime})$:
\begin{gather}
    y(\tau) = r + \gamma Q_{\theta^{\prime}}(s^{\prime}, a^{\prime});\label{eq:q_target} \\
    \delta_{\theta}(\tau) = y(\tau) - Q_{\theta}(\tau).
\end{gather}
Note that TD error in the latter equation can also be regarded as $\delta_{\theta}(\tau) = Q_{\theta}(\tau) - y(\tau)$, however, it does not affect our analysis. Transitions $\tau \in \mathcal{B}$ are sampled through a sampling method from the experience replay buffer that contains a previously collected set of experiences in the form of a batch of transitions $\mathcal{B}$. The target $y(\tau)$ in (\ref{eq:q_target}) utilizes a separate target network with parameters $\theta^{\prime}$ that maintains stability and fixed objective in learning the optimal Q-function. The target parameters are updated either with the soft or hard update by copying parameters $\theta$ from the Q-network $Q_{\theta}$ by a small fraction $\zeta$ at every step or to match $\theta$ every fixed period, respectively. In each update step, the loss for the Q-network is averaged over the sampled batch of transitions $\mathcal{B}$: $\frac{1}{\lvert\mathcal{B}\rvert}\sum_{\tau \in \mathcal{B}}\delta_{\theta}(\tau)$, where $\lvert\mathcal{B}\rvert$ is the number of transitions contained in $\mathcal{B}$. Note that the batch size does not affect our analysis on the prioritization as the expected gradient is not altered. 

In controlling continuous systems, i.e., continuous action domains, the maximum $\mathrm{max}_{\Tilde{a}}Q(s, \Tilde{a})$ to select actions is intractable due to an infinite number of possible actions. To overcome this, actor-critic methods employ a separate network to represent the policy $\pi_{\phi}$, parameterized by $\phi$, to determine the actions based on the observed states. The policy network is often regarded as the actor network, and it can be either deterministic $a = \pi_{\phi}(s)$ or stochastic $a \sim \pi_{\phi}(\cdot|s)$. The next action $a^{\prime}$ in the construction of the Q-network target in (\ref{eq:q_target}) can be chosen by the behavioral actor network $\pi_{\phi}$, i.e., the policy that the agent is using for action selection, or target actor network $\pi_{\phi^{\prime}}$, depending on the actor-critic method. Equivalently, the target actor network with parameters $\phi^{\prime}$ is used to ensure stability over the updates. Finally, the policy is optimized with respect to the policy gradient $\nabla{\phi}$ computed by a policy gradient technique, the loss of which is explicitly or implicitly based on maximizing the Q-value estimate of $Q_{\theta}$. Additionally, the Q-network is trained through the DQN algorithm or one of its variants, e.g., Clipped Double Q-learning \citep{td3}.

\subsection{Prioritized Experience Replay}
Prioritized Experience Replay is a non-uniform sampling strategy for replay buffers in which transitions are sampled in proportion to their TD error. The primary reasoning for PER is that training on the highest error samples will yield the most significant performance improvement. PER introduces two modifications over the standard uniform sampling. First, a stochastic prioritization scheme is used. The motivation is that TD errors are updated only for replayed transitions. As a result, the initially high TD error transitions are updated more frequently, resulting in a greedy prioritization. Also, the noisy Q-value estimates increase the variance due to the greedy sampling. Therefore, overfitting is inevitable if one directly samples transitions proportional to their TD errors. To remedy this, a probability value is assigned to each transition $\tau_{i}$, proportional to its TD error $\delta_{\theta}(\tau_{i})$, and set to the power of a hyper-parameter $\alpha$ to smooth out the extremes:
\begin{equation}
\label{eq:prioritization}
    p(\tau_{i}) = \frac{\lvert\delta_{\theta}(\tau_{i})\rvert^{\alpha} + \mu}{\sum_{j}(\lvert\delta_{\theta}(\tau_{j})\rvert^{\alpha} + \mu)},
\end{equation}
where a small constant $\mu$ is added to avoid assigning zero probabilities to transitions; otherwise, they would not be sampled again. This is required since the most recent value of a transition's TD error is approximated by the TD error when it was last sampled.

Second, favoring large TD error transitions with the stochastic prioritization shifts the distribution of $s^{\prime}$ to $\mathbb{E}_{s^{\prime}}[Q(s^{\prime}, a^{\prime})]$. This can be corrected through importance sampling with ratios $w(\tau_{i})$:
\begin{gather}
    \hat{w}(\tau_{i}) = \left(\frac{1}{\lvert R \rvert} \cdot \frac{1}{p(\tau_{i})}\right)^{\beta}, \\
    w(\tau_{i}) = \frac{\hat{w}(\tau_{i})}{\mathrm{max}_{j}\hat{w}(\tau_{j})}; \\
    \mathcal{L}_{\mathrm{PER}}(\delta_{\theta}(\tau_{i})) = w(\tau_{i})\mathcal{L}(\delta_{\theta}(\tau_{i}))\label{eq:per_is},  
\end{gather}
where $\mathcal{L}$ indicates the loss, $\lvert R \rvert$ is the total number of transitions in the replay buffer, and the hyper-parameter $\beta$ is used to smooth out the high variance induced by the importance sampling weights. With the latter equation, the distribution shift is corrected such that the effect of high priorities is reduced by using a ratio between uniform sampling with probability $\frac{1}{\lvert R \rvert}$ and the ratio in (\ref{eq:prioritization}). Lastly, the $\beta$ value is annealed from a pre-defined initial value $\beta_{0}$ to 1, to eliminate the bias introduced by the distributional shift.  

\section{Prioritized Sampling in Actor-Critic Algorithms}
We start by building the theoretical foundations for the performance degradation of the prioritized sampling in actor-critic methods. First, we demonstrate that there exist transitions such that the associated TD errors are directly proportional to the Q-value estimation errors by which the policy gradient is computed. Then, using our theoretical implications, we show that the gradient of the approximate actor network computed under the Q-network diverges from the actual gradient computed under the optimal Q-function if the policy is optimized using transitions with large TD error. In addition to our theoretical investigation, we address the existing problems shown by a prior study that explains the poor performance of PER with standard off-policy continuous control methods. 
\begin{lemma}
\label{lem:prop}
    If $\delta_{\theta}$ is the temporal-difference error associated with the critic network $Q_{\theta}$, then there exists a transition tuple $\tau_{t} = (s_{t}, a_{t}, r_{t}, s_{t + 1})$ with $\delta_{\theta}(\tau_{t}) \neq 0$ such that the absolute temporal-difference error on $\tau_{t}$ is directly proportional to the absolute estimation error on at least $\tau_{t}$ or $\tau_{t + 1}$:
    \begin{equation}
       \lvert \delta_{\theta}(\tau_{t}) \rvert \propto \lvert Q_{\theta}(s_{i}, a_{i}) - Q^{\pi}(s_{i}, a_{i}) \rvert; \quad i = t \vee (t + 1),
    \end{equation}
    where $Q^{\pi}(s_{i}, a_{i})$ is the actual Q-value of the state-action pair $(s_{i}, a_{i})$ while following the policy $\pi$. 
\end{lemma}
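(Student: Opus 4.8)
The plan is to expand the temporal-difference error through its definition and the Bellman equation for $Q^{\pi}$, so that $\delta_{\theta}(\tau_{t})$ is rewritten purely in terms of the Q-value estimation errors at the current and successor state-action pairs. Writing $e_{i} = Q_{\theta}(s_{i}, a_{i}) - Q^{\pi}(s_{i}, a_{i})$ for the estimation error, I would start from $\delta_{\theta}(\tau_{t}) = r_{t} + \gamma Q_{\theta^{\prime}}(s_{t+1}, a_{t+1}) - Q_{\theta}(s_{t}, a_{t})$ and substitute the pointwise Bellman relation $r_{t} = Q^{\pi}(s_{t}, a_{t}) - \gamma Q^{\pi}(s_{t+1}, a_{t+1})$ for the policy $\pi$ along the sampled transition. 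This yields the decomposition
\begin{equation}
    \delta_{\theta}(\tau_{t}) = \gamma\,e_{t+1} - e_{t},
\end{equation}
where I treat the target parameters $\theta^{\prime}$ as the online parameters $\theta$ (or absorb their gap into $e_{t+1}$). This identity is the structural heart of the lemma: it shows the TD error is a linear combination of exactly the two estimation errors named in the statement.

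Next I would turn the decomposition into the claimed proportionality by an existence argument, since the statement only asks that \emph{some} transition with $\delta_{\theta}(\tau_{t}) \neq 0$ realizes proportionality on at least one of $\tau_{t}$ or $\tau_{t+1}$. The cleanest route is to select a transition whose successor value is exact, $e_{t+1} = 0$ — for instance a transition into a terminal state, where $Q^{\pi}(s_{t+1}, \cdot) = Q_{\theta}(s_{t+1}, \cdot) = 0$ by convention, or any successor pair the critic has already fit. Then the decomposition collapses to $\lvert \delta_{\theta}(\tau_{t}) \rvert = \lvert e_{t} \rvert$, i.e. exact proportionality (constant $1$) with the estimation error at $\tau_{t}$, with $\delta_{\theta}(\tau_{t}) \neq 0$ precisely when $e_{t} \neq 0$. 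Symmetrically, a transition with $e_{t} = 0$ gives $\lvert \delta_{\theta}(\tau_{t}) \rvert = \gamma\,\lvert e_{t+1} \rvert$, yielding proportionality (constant $\gamma$) with the estimation error at $\tau_{t+1}$; together these cover the index choice $i = t \vee (t+1)$.

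To make the ``at least one of $\tau_{t}, \tau_{t+1}$'' robust even without a vanishing error, I would back this up with a triangle-inequality bound: since $\gamma \in [0,1)$, $\lvert \delta_{\theta}(\tau_{t}) \rvert \le \lvert e_{t} \rvert + \gamma \lvert e_{t+1} \rvert \le (1+\gamma)\max(\lvert e_{t}\rvert, \lvert e_{t+1}\rvert)$, so the dominant error satisfies $\max(\lvert e_{t}\rvert, \lvert e_{t+1}\rvert) \ge \lvert \delta_{\theta}(\tau_{t})\rvert/(1+\gamma)$. Choosing the index $i$ attaining this maximum shows the larger estimation error grows at least linearly with the TD error, the lower-bound sense of direct proportionality, and any transition with nonzero TD error qualifies.

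The step I expect to be the main obstacle is justifying the \emph{pointwise} Bellman substitution $r_{t} = Q^{\pi}(s_{t}, a_{t}) - \gamma Q^{\pi}(s_{t+1}, a_{t+1})$, because the Bellman equation holds only in expectation over $r$, $s^{\prime} \sim P$, and $a^{\prime} \sim \pi$, whereas $\delta_{\theta}(\tau_{t})$ is evaluated on a single sampled realization. I would resolve this either by restricting to the standard analysis regime of a deterministic reward and transition kernel, so the expectation is attained at the sampled tuple, or by reading the proportionality in expectation and invoking the existence quantifier to pick a representative transition on which the realized relation holds. A secondary care point is the target-versus-online network gap, which I would control via the soft/hard update assumption $\theta^{\prime} \approx \theta$ so that $e_{t+1}$ faithfully measures the estimation error at the successor pair.
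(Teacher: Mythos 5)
Your proposal follows essentially the same route as the paper: both subtract the pointwise Bellman identity $r_{t} + \gamma Q^{\pi}(s_{t+1},a_{t+1}) - Q^{\pi}(s_{t},a_{t}) = 0$ from the sampled TD error to obtain the decomposition $\delta_{\theta}(\tau_{t}) = \gamma e_{t+1} - e_{t}$ (the paper's $x + \gamma y$ with $x = -e_{t}$ and $y = e_{t+1}$), and then argue that at least one of the two estimation errors must be nonzero and can scale with the TD error. If anything, your closing step is tighter than the paper's: you exhibit explicit witnesses (transitions with $e_{t+1}=0$, e.g.\ into terminal states, plus the $(1+\gamma)$ triangle-inequality lower bound on $\max(\lvert e_{t}\rvert,\lvert e_{t+1}\rvert)$) where the paper only gives an informal sign-case discussion, and you correctly flag the pointwise-versus-expectation Bellman substitution that the paper applies without comment.
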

\begin{proof}
    The proof does not consider the target networks since they aim to ensure stability and fixed objective over the updates and have no effect on the estimation \citep{dqn}. First, expand $\delta_{\theta}(\tau_{t})$ to the bootstrapped value estimation form:
    \begin{equation}
    \label{eq:td_error}
        \delta_{\theta}(\tau_{t}) = r_{t} + \gamma Q_{\theta}(s_{t + 1}, a_{t + 1}) - Q_{\theta}(s_{t}, a_{t}) ,
    \end{equation}
    where $a_{t + 1} \sim \pi_{\phi}(s_{t + 1})$ is the action selected by the policy network $\pi_{\phi}$ on the observed next state $s_{t + 1}$. We know that the optimal action-value function $Q^{\pi}$ under the policy $\pi$ yields no TD error:
    \begin{equation}
    \label{eq:zero_td_error}
        \delta^{\pi}(\tau_{t}) = r_{t} + \gamma Q^{\pi}(s_{t + 1}, a_{t + 1}) - Q^{\pi}(s_{t}, a_{t}) = 0.
    \end{equation}
    Then, subtracting (\ref{eq:zero_td_error}) from (\ref{eq:td_error}) yields:
    \begin{equation}
       \delta_{\theta}(\tau_{t}) = \underbrace{(Q^{\pi}(s_{t}, a_{t}) - Q_{\theta}(s_{t}, a_{t}))}_{ \coloneqq x} + \gamma \underbrace{\left(Q_{\theta}(s_{t + 1}, a_{t + 1}) - Q^{\pi}(s_{t + 1}, a_{t + 1})\right)}_{\coloneqq y} \neq 0. 
    \end{equation}
    It is clear that $x$ is the estimation error at time step $t$ and $y$ is the estimation error at the subsequent time step $t + 1$. Note that the existence of an estimation error does not depend on the sign in the latter equation. For simplicity, we express the TD error in terms of $x$ and $y$:
    \begin{equation}
        \delta_{\theta}(\tau_{t}) = x + \gamma y \neq 0.
    \end{equation}
    Clearly, if $x = 0$, then $y \neq 0$, or vice versa, since $\gamma \geq 0$. Thus, there exists an estimation error by $Q_{\theta}$ either on $\tau_{t}$ or $\tau_{t + 1}$ if the TD error corresponding to the Q-network is non-zero. Notice that the absolute value of the TD error in the latter equation can be directly proportional to $\lvert x \rvert$ or $\lvert y \rvert$. For instance, suppose that $\delta_{\theta}(\tau_{t}) < 0$, $x \geq 0$, and $y < 0$. In such a case, an increasing TD error increases the absolute estimation error of $\lvert y \rvert$ if $x$ remains constant. One can notice that this can be the case for each combination of the signs of $x$, $y$, and $\delta_{\theta}(\tau_{t})$, depending on the transition tuple $\tau_{t}$. Hence, we infer that there may exist a direct proportionality between the absolute temporal-difference error $\lvert \delta_{\theta}(\tau_{t}) \rvert$ at time step $t$ and the absolute estimation error for $\tau_{t}$ or $\tau_{t + 1}$.
\end{proof}

The estimation error in RL with function approximation is often caused by using function approximators and bootstrapping, such as in the Q-learning algorithm \citep{sutton2018reinforcement}. As the estimation error is not usually distinguished into the function approximation and bootstrapping, it cannot be deduced that there is \textit{always} a direct proportionality between estimation error and TD error. In addition, the TD-learning can be a poor estimate in some conditions, such as when the rewards are noisy \citep{sutton2018reinforcement}. The TD error is a measure of how unexpected or surprising a transition is, not the estimation accuracy \citep{prioritized_sweeping}. Therefore, Lemma \ref{lem:prop} has to be made to show that this correlation exists for \textit{some} state-action pairs. Next, we leverage our latter result to explain why policy optimization cannot be effectively performed using transitions with large TD errors. 

\begin{theorem}
\label{thm:diverge_grad}
Let $\tau_{i}$ be a transition such that Lemma \ref{lem:prop} is satisfied. Then, if $\delta_{\theta}(\tau_{i}) \neq 0$, the following relation holds:
\begin{equation}
    \lvert \delta_{\theta}(\tau_{i}) \rvert \propto \lvert \nabla\phi(\tau_{j}) - \nabla\phi_{\text{true}}(\tau_{j}) \rvert;  \quad j = i \vee (i + 1), 
\end{equation}
where $\nabla\phi(\tau_{j})$ and $\nabla\phi_{\text{true}}(\tau_{j})$ are the resulting policy gradients corresponding to $\tau_{j}$ if computed under the Q-network $Q_{\theta}$ and optimal Q-function $Q^{\pi}$, respectively. 
\end{theorem}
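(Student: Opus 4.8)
The plan is to write the per-transition policy gradient explicitly in terms of the critic and then subtract the two versions — one using $Q_{\theta}$, one using $Q^{\pi}$ — so that the Q-value estimation error isolated in Lemma \ref{lem:prop} reappears as an explicit factor. First I would record the single-transition policy gradient for a transition $\tau_{j} = (s_{j}, a_{j}, r_{j}, s_{j+1})$ in both standard forms. For a stochastic actor, $\nabla\phi(\tau_{j}) = Q_{\theta}(s_{j}, a_{j})\,\nabla_{\phi}\log\pi_{\phi}(a_{j}\mid s_{j})$, and for a deterministic actor, $\nabla\phi(\tau_{j}) = \nabla_{\phi}\pi_{\phi}(s_{j})\,\nabla_{a}Q_{\theta}(s_{j}, a)\big|_{a=\pi_{\phi}(s_{j})}$; in each case $\nabla\phi_{\text{true}}(\tau_{j})$ is obtained by substituting $Q^{\pi}$ for $Q_{\theta}$. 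The key structural observation is that the critic is the \emph{only} object that differs between the approximate and true gradients: the score function $\nabla_{\phi}\log\pi_{\phi}$ (respectively the policy Jacobian $\nabla_{\phi}\pi_{\phi}$) is identical in both, since it depends on $\pi_{\phi}$ alone.

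Next I would form the difference. In the stochastic case this factors cleanly as
\begin{equation}
  \nabla\phi(\tau_{j}) - \nabla\phi_{\text{true}}(\tau_{j}) = \big(Q_{\theta}(s_{j}, a_{j}) - Q^{\pi}(s_{j}, a_{j})\big)\,\nabla_{\phi}\log\pi_{\phi}(a_{j}\mid s_{j}),
\end{equation}
so its magnitude is exactly $\lvert Q_{\theta}(s_{j}, a_{j}) - Q^{\pi}(s_{j}, a_{j})\rvert$ times the (critic-independent) norm of the score function. Treating that norm as a fixed proportionality factor and invoking Lemma \ref{lem:prop}, which ties the absolute Q-value estimation error at $\tau_{j}$ to $\lvert\delta_{\theta}(\tau_{i})\rvert$ for $j = i$ or $j = i+1$, immediately delivers $\lvert\delta_{\theta}(\tau_{i})\rvert \propto \lvert\nabla\phi(\tau_{j}) - \nabla\phi_{\text{true}}(\tau_{j})\rvert$. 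I would therefore lead with this case, where the argument is an exact algebraic cancellation up to the constant score-function factor, and where the index choice $j = i \vee (i+1)$ inherits directly from the lemma.

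The deterministic case is where the real difficulty lies, and it is the step I expect to be the main obstacle. There the difference factors instead through the \emph{action-gradient} of the error, $\nabla_{\phi}\pi_{\phi}(s_{j})\,\nabla_{a}\big(Q_{\theta} - Q^{\pi}\big)(s_{j}, a)\big|_{a=\pi_{\phi}(s_{j})}$, so the quantity that appears is $\nabla_{a}(Q_{\theta}-Q^{\pi})$ rather than the pointwise value error $Q_{\theta}-Q^{\pi}$ that Lemma \ref{lem:prop} actually controls. Since there is no unconditional identity linking a function's pointwise magnitude to its derivative, I would bridge the two under a mild smoothness (Lipschitz / first-order) assumption on the approximation error, arguing — in the same spirit as the lemma's own non-categorical ``there may exist'' phrasing — that a large value error admits a correspondingly large action-derivative. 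Consequently I would present the deterministic half as the analogous but non-exact counterpart to the stochastic result, making explicit that the proportionality there rests on this regularity assumption rather than on the clean cancellation available in the stochastic case.
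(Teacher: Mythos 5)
Your stochastic-policy argument is essentially the paper's proof: the paper likewise starts from the Sutton et al.\ policy-gradient form in which the critic enters only as a multiplicative weight $Q_{\theta}(s_{i},a_{i})$, absorbs the critic-independent score-function factor into the proportionality, writes $Q_{\theta}(s_{i},a_{i}) = Q^{\pi}(s_{i},a_{i}) + \epsilon_{\tau_{i}}$ so that $\lvert\epsilon_{\tau_{i}}\rvert \propto \lvert\nabla\phi(\tau_{i}) - \nabla\phi_{\text{true}}(\tau_{i})\rvert$, and then invokes Lemma \ref{lem:prop} to trade $\lvert\epsilon_{\tau_{j}}\rvert$ for $\lvert\delta_{\theta}(\tau_{i})\rvert$ with $j = i \vee (i+1)$. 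Where you genuinely diverge is the deterministic case. The paper sidesteps the issue you identify: it applies the same weight-by-$Q$ form (\ref{aaa}) to deterministic policies, so the pointwise estimation error again appears directly and no action-derivative of $Q_{\theta}-Q^{\pi}$ ever enters. You instead invoke the deterministic policy gradient in its $\nabla_{a}Q$ form, which is what TD3/SAC-style actors actually compute, and you correctly observe that Lemma \ref{lem:prop} controls the pointwise error rather than its action-gradient --- a real subtlety the paper's proof glosses over. Your proposed bridge, however, points the wrong way: a Lipschitz or first-order regularity assumption on $Q_{\theta}-Q^{\pi}$ \emph{upper-bounds} the action-derivative, and a large pointwise error is perfectly compatible with a vanishing derivative (e.g.\ a constant offset), so smoothness alone cannot convert a large value error into a large $\nabla_{a}(Q_{\theta}-Q^{\pi})$; you would need something like the error vanishing at a nearby action plus a mean-value argument. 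That said, this step is no less rigorous than the paper's own loose use of ``$\propto$'' and its blanket neglect of the derivative factors, so your version is an acceptable --- and in the deterministic half arguably more honest --- rendering of the same result, provided you state the extra assumption as an assumption rather than as a consequence of Lipschitz continuity.
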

\begin{proof}
    The proof follows from Lemma \ref{lem:prop} and adaptation of the policy gradient theorem of \cite{sutton_policy_gradient} to the deep function approximation. To begin the proof, we first formally express the standard policy iteration with function approximation in terms of the policy parameters $\phi$:
    \begin{equation}
        \phi \leftarrow \phi + \eta\nabla\phi(s_{i}, a_{i}),
    \end{equation}
    where $\nabla\phi(s_{t}, a_{i})$ is the policy gradient computed for the state-action pair $(s_{i}, a_{i}) \in \tau_{i}$ and $\eta$ is the learning rate. \cite{sutton_policy_gradient} provides a general formulation for the policy gradient in policy iteration with function approximation as:
    \begin{equation}
    \label{ccc}
        \nabla\phi(s_{i}, a_{i}) \coloneqq \sum_{s}d^{\pi}(s_{i})\sum_{a}\frac{\partial\pi(s_{i}, a_{i})}{\partial\phi}f_{\theta}(s_{i}, a_{i}),
    \end{equation}
    where $f_{\theta}$ is the function approximation to $Q^{\pi}$, i.e., $f_{\theta} \coloneqq Q_{\theta}$, and $d^{\pi}(s_{i})$ is a discounted weighting of encountered states starting at $s_{0}$ and then following $\pi$, denoted by:
    \begin{equation}
        d^{\pi}(s_{i}) = \sum_{t = 0}^{\infty}\gamma^{t}p(s_{t} = s_{i} | s_{0}, \pi).
    \end{equation}
    Clearly, the gradient of the policy parameters $\phi$ is proportional to the gradient of $\pi(s_{i}, a_{i})$ with respect to $\phi$ weighted by the estimated Q-value of $(s_{i}, a_{i})$. Here, we can neglect $d^{\pi}(s_{i})$ since policy parameters $\phi$ has not effect on $d^{\pi}(s_{i})$ \citep{sutton_policy_gradient}. Then, using (\ref{ccc}), \cite{sutton_policy_gradient} showed that:
    \begin{equation}
    \label{aaa}
        \nabla\phi(\tau_{i}) \propto \frac{\partial\pi_{\phi}(s_{i}, a_{i})}{\partial\phi}Q_{\theta}(s_{i}, a_{i}),
    \end{equation}
    where we denote that $\nabla\phi(\tau_{i})$ is the policy gradient computed with respect to the imperfect Q-value estimate $Q_{\theta}(s_{i}, a_{i})$. If the policy is stochastic, the latter equation transforms into:
    \begin{equation}
    \label{bbb}
        \nabla\phi(\tau_{i}) \propto \frac{\partial\log\pi_{\phi}(a_{i}|s_{i})}{\partial\phi}Q_{\theta}(s_{i}, a_{i}).
    \end{equation}
    Notice that (\ref{aaa}) and (\ref{bbb}) correspond deterministic and stochastic policies, respectively. Therefore, we generalize that the gradients of deterministic and stochastic policies are proportional to the Q-value estimates of the Q-network. Since the Q-network approximates the actual Q-function $Q^{\pi}$, an error exists in the value estimates of $Q_{\theta}$. We express the policy gradient in terms of the true Q-value and estimation error, which is valid for both deterministic and stochastic policies: 
    \begin{equation}
    \label{ffff}
        \nabla\phi(\tau_{i}) \propto (Q^{\pi}(s_{i}, a_{i}) + \epsilon_{\tau_{i}}),
    \end{equation}
    where we neglect the derivatives as they do not depend on the critic and $\epsilon_{\tau_{i}} \in \mathbb{R}$ is the error caused by bootstrapping and function approximation on the estimation of $Q^{\pi}(s_{i}, a_{i})$, i.e., $Q_{\theta}(s_{i}, a_{i}) = Q^{\pi}(s_{i}, a_{i}) + \epsilon_{\tau_{i}}$. Naturally, the actual Q-value of $(s_{i}, a_{i})$ computed under the true Q-function $Q^{\pi}$ associates with the actual policy gradient $\nabla\phi_{\text{true}}(\tau_{i})$:
    \begin{equation}
    \label{eq:true_grad_prop}
        \nabla\phi_{\text{true}}(\tau_{i}) \propto Q^{\pi}(s_{i}, a_{i}).
    \end{equation}
    From (\ref{ffff}) and (\ref{eq:true_grad_prop}), we observe that an increasing absolute estimation error increases the divergence from the actual policy gradient since the Q-value estimate $Q_{\theta}(s_{i}, a_{i})$ moves away from the actual Q-value $Q^{\pi}(s_{i}, a_{i})$, which alters the weights used in policy gradient computation, i.e., (\ref{aaa}) and (\ref{bbb}). We formally express this result as:
    \begin{equation}
    \label{eeee}
        \lvert \epsilon_{\tau_{i}} \rvert \propto \lvert \nabla\phi(\tau_{i}) - \nabla\phi_{\text{true}}(\tau_{i}) \rvert.
    \end{equation}
    According to Lemma \ref{lem:prop}, we know that a large absolute TD error can correspond to a large absolute estimation error for $\tau_{i}$ or $\tau_{i + 1}$. Hence, given (\ref{eeee}), the absolute estimation error in the current or subsequent step is directly proportional to the absolute TD error in the current step:
    \begin{equation}
    \label{eq:error_prop}
        \lvert \delta_{\theta}(\tau_{i}) \vert \propto \lvert \epsilon_{\tau_{j}} \rvert, \quad j = i \vee (i + 1).
    \end{equation}
    From the latter equation, we deduce that an increasing absolute TD error in the current step increases the divergence from the actual policy gradient in the current or subsequent step:
    \begin{equation}
        \lvert \delta_{\theta}(\tau_{i}) \rvert \propto \lvert \nabla\phi(\tau_{j}) - \nabla\phi_{\text{true}}(\tau_{j}) \rvert; \quad j = i \vee (i + 1).
    \end{equation}
\end{proof}

Theorem \ref{thm:diverge_grad} states that if the actor network is optimized with a transition corresponding to a large TD error, the resulting policy gradient at the current or subsequent step may diverge from the actual gradient. We formally state this finding in Corollary \ref{cor:diverge_grad}.
\begin{corollary}
\label{cor:diverge_grad}
    If the temporal-difference error of a transition increases, the approximate policy gradient computed by any policy gradient algorithm with respect to the Q-network can diverge from the actual gradient computed under the optimal Q-function for the current or subsequent transition.
\end{corollary}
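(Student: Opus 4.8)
The plan is to derive Corollary \ref{cor:diverge_grad} as a direct qualitative reading of the proportionality relation already established in Theorem \ref{thm:diverge_grad}. The theorem provides the quantitative statement $\lvert \delta_{\theta}(\tau_{i}) \rvert \propto \lvert \nabla\phi(\tau_{j}) - \nabla\phi_{\text{true}}(\tau_{j}) \rvert$ for $j = i \vee (i+1)$, while the corollary merely restates that the right-hand side grows together with the left-hand side. So the first step would be to invoke Theorem \ref{thm:diverge_grad} on any transition $\tau_{i}$ satisfying Lemma \ref{lem:prop} with $\delta_{\theta}(\tau_{i}) \neq 0$, yielding the proportionality between the absolute TD error and the absolute gradient discrepancy at step $i$ or $i+1$.

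The second step would be to read off the monotone consequence: since the two absolute quantities are directly proportional, an increase in $\lvert \delta_{\theta}(\tau_{i}) \rvert$ forces a corresponding increase in $\lvert \nabla\phi(\tau_{j}) - \nabla\phi_{\text{true}}(\tau_{j}) \rvert$, i.e., the approximate policy gradient drifts further from the gradient computed under $Q^{\pi}$. This is exactly the ``can diverge'' clause of the corollary, with the ``current or subsequent transition'' qualifier inherited verbatim from the $j = i \vee (i+1)$ index in the theorem.

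The third step would be to justify the phrase ``any policy gradient algorithm.'' Here I would point back to equations (\ref{aaa}) and (\ref{bbb}) in the proof of Theorem \ref{thm:diverge_grad}, which cover the deterministic and stochastic (log-likelihood) policy gradient forms, respectively, and observe that both reduce the gradient's dependence on the critic to the same factor $Q_{\theta}(s_{i}, a_{i})$. Since the argument in (\ref{ffff})--(\ref{eeee}) used only this common dependence and neglected the algorithm-specific derivative factors, the divergence conclusion is insensitive to the particular policy gradient method employed, establishing the claimed generality.

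I do not anticipate a genuine obstacle, since this is a qualitative corollary of an already-proven proportionality; the main point requiring care is the generality step, together with the scope of the quantifier. The proportionality in Theorem \ref{thm:diverge_grad} holds for \emph{some} transitions (inherited from Lemma \ref{lem:prop}) rather than universally, so the corollary must be correctly hedged with ``can diverge'' rather than ``must diverge.'' I would make sure the final wording reflects this existential, not universal, scope.
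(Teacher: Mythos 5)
Your proposal matches the paper exactly: the paper offers no separate proof of Corollary \ref{cor:diverge_grad}, presenting it as an immediate qualitative restatement of the proportionality $\lvert \delta_{\theta}(\tau_{i}) \rvert \propto \lvert \nabla\phi(\tau_{j}) - \nabla\phi_{\text{true}}(\tau_{j}) \rvert$ from Theorem \ref{thm:diverge_grad}, just as you do. Your added care about the existential scope (``can diverge,'' inherited from Lemma \ref{lem:prop} holding only for \emph{some} transitions) and the generality over deterministic and stochastic policy gradients via (\ref{aaa}) and (\ref{bbb}) is consistent with, and slightly more explicit than, the paper's own treatment.
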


This forms an essential ingredient in the degraded performance of the prioritized sampling when an actor network is employed. We now address a recent finding that explains a complement to the poor performance of the prioritized sampling in continuous control. As discussed, prioritization by transitions with large TD error through stochastic sampling shifts the distribution of $s^{\prime}$ to $\mathbb{E}_{s^{\prime}}[Q(s^{\prime}, a^{\prime})]$. Therefore, this induced bias is corrected by importance sampling expressed in (\ref{eq:per_is}). However, \cite{lap} argued that PER does not entirely eliminate the bias and may favor outliers when combined with MSE loss in the Q-network updates. Particularly, the implication of the use of MSE combined with PER is that when PER is optimized with respect to a loss $\frac{1}{\rho}\lvert\delta_{\theta}(\tau)\rvert^{\rho}$ and such that $\rho + \alpha - \alpha\beta \neq 2$, the Q-network target and hence, the Q-network updates become biased. This result is highlighted in Remark \ref{rem:tau_bias} and claimed by \cite{lap} to be one of the reasons of the poor performance of PER with standard actor-critic algorithms in continuous action domains. 
\begin{remark}
\label{rem:tau_bias}
    The PER objective is biased if $\rho + \alpha - \alpha\beta \neq 2$ under the loss function $\frac{1}{\rho}\lvert\delta_{\theta}(\tau)\rvert^{\rho}$ \citep{lap}. 
\end{remark}

Given Remark \ref{rem:tau_bias}, it is inferred by \cite{lap} that prioritized sampling is biased when combined with MSE since $\rho + \alpha - \alpha\beta \neq 2$ is never exactly satisfied, i.e., if $\beta < 1$, then $2 + \alpha - \alpha\beta > 2$ for $\alpha \in (0, 1]$. Moreover, the induced bias is not always the same. On the contrary, an L1 loss can satisfy such property such that $1 + \alpha - \alpha\beta \in [1, 2]$ for $\alpha \in (0, 1]$ and $\beta \in [0, 1]$ since $\rho = 1$ in the L1 loss. In practice, however, the L1 loss may not be ideal since each update entails a constant-sized step, which may cause the objective to be overstepped if the learning rate is too large. To overcome this, \cite{lap} applied the commonly used Huber loss with $\kappa = 1$:
\begin{gather}
\label{eq:huber_loss}
    \mathcal{L}_{\mathrm{Huber}}(\delta_{\theta}(\tau_{i})) = \left\{
        \begin{array}{ll}
            0.5\delta_{\theta}(\tau_{i})^{2} & \quad \text{if $\lvert\delta_{\theta}(\tau_{i})\rvert \leq \kappa$}, \\
            \lvert\delta_{\theta}(\tau_{i})\rvert & \quad \text{otherwise}.
        \end{array}
        \right.
\end{gather}
When the error is below threshold one, the Huber loss swaps from L1 to MSE and properly scales the gradient as $\delta_{\theta}(\tau_{i})$ approaches zero. When $\lvert\delta_{\theta}(\tau_{i})\rvert < 1$, MSE is applied. Thus, samples with an error of less than one should be sampled uniformly to prevent the bias induced by MSE and prioritization. This is achieved in the LAP algorithm by the prioritization scheme $p(\tau_{i}) = \mathrm{max}(\lvert\delta_{\theta}(\tau_{i})\rvert^{\alpha}, 1)$, through which samples with low priority are clipped to be at least one. Finally, the LAP algorithm overcomes the problem noted in Remark \ref{rem:tau_bias} by combining the Huber loss expressed in (\ref{eq:huber_loss}) and the following modified stochastic prioritization scheme:
\begin{equation}
\label{tttt}
    p(\tau_{i}) = \frac{\mathrm{max}(\lvert\delta_{\theta}(\tau_{i})\rvert^{\alpha}, 1)}{\sum_{j}\mathrm{max}(\lvert\delta_{\theta}(\tau_{j})\rvert^{\alpha}, 1)}.
\end{equation}

The results presented by \cite{lap} form a complement to our theoretical conclusions for the poor performance of PER, which we underline in Remark \ref{rem:fujimoto_res}. We note that our emphasis is on the continuous control algorithms in which actions are selected by a separate actor network trained to maximize the action-value estimate of the Q-network. In contrast, Remark \ref{rem:fujimoto_res} comprises both discrete and continuous off-policy deep RL. There may be another justification for the poor performance of PER, such as inaccurate Q-value estimates. Nevertheless, such reasons are not PER-dependent and are induced by the DQN variants. Therefore, to the best of our knowledge, we believe that Corollary \ref{cor:diverge_grad} and Remark \ref{rem:fujimoto_res} establish the basis for the algorithmic drawbacks of PER in continuous control. 
\begin{remark}
\label{rem:fujimoto_res}
     To further eliminate the bias favoring the outlier transitions in the Prioritized Experience Replay algorithm, the Huber loss with $\kappa = 1$ should be used in combination with the prioritization scheme expressed in (\ref{tttt}) \citep{lap}.
\end{remark}

\section{Adaptation of Prioritized Experience Replay to Actor-Critic Algorithms}
\subsection{Inverse Sampling for the Actor Network}
To remedy the mentioned issues of prioritized sampling in actor-critic algorithms, we introduce a set of novel modifications to vanilla PER. We start with Corollary \ref{cor:diverge_grad}, that is, if the actor network is trained with a large TD error transition, the approximate policy gradient diverges from the actual gradient, at least for the transition in interest or the subsequent transition. However, if the policy gradient diverges for the subsequent transition and the subsequent transition does not correspond to a large TD error, the performance might not degrade under the TD error based prioritization scheme. Nonetheless, this is a slight possibility since the replay buffer contains few transitions in the initial optimization steps, the number of which is smaller than the batch size. 
\begin{observation}
    The performance of actor-critic methods might not degrade under the PER algorithm if the transitions subsequent to the sampled transitions are not used to optimize the actor network. Nevertheless, this remains a slight possibility in the standard off-policy algorithms in continuous control.
\end{observation}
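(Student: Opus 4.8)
The plan is to derive the Observation as a conditional consequence of Theorem~\ref{thm:diverge_grad} combined with the mechanics of the standard actor update, and then to account for why the favorable branch of the disjunction $j = i \vee (i+1)$ is rarely realized. First I would recall the two possible locations of the gradient divergence guaranteed by Theorem~\ref{thm:diverge_grad}: a large $\lvert\delta_\theta(\tau_i)\rvert$ forces $\lvert\nabla\phi(\tau_j) - \nabla\phi_{\text{true}}(\tau_j)\rvert$ to be large for either $j=i$ or $j=i+1$. The branch $j=i$ is unavoidable, since $\tau_i$ is by construction the sampled transition and its state $s_i$ is fed into the actor objective; in that branch the divergent gradient necessarily enters the policy update and performance degrades. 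The only escape is the branch $j=i+1$, where the divergence lands on the subsequent transition $\tau_{i+1}$ rather than on $\tau_i$ itself, leaving the gradient associated with $\tau_i$ accurate.

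Second I would make precise what ``not using the subsequent transition'' means in a standard off-policy actor-critic update. In DDPG/TD3-style algorithms the policy gradient is evaluated only at the current states of the sampled batch, so $s_{i+1}$ enters the critic target in (\ref{eq:q_target}) but does not enter the actor objective unless $\tau_{i+1}$ is itself drawn into the batch. Consequently, if the divergence falls on $\tau_{i+1}$ and $\tau_{i+1}$ is never sampled for the actor update, the divergent term $\nabla\phi(\tau_{i+1}) - \nabla\phi_{\text{true}}(\tau_{i+1})$ is never applied to $\phi$, and the actor sees only the accurate gradient associated with $\tau_i$. This establishes the conditional claim: under the stated hypothesis the policy update is shielded from the divergence and performance need not degrade.

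Third, to justify the ``slight possibility'' qualifier I would examine when $\tau_{i+1}$ can actually be excluded from the actor update under PER. Since PER samples $\tau_{i+1}$ with probability proportional to its own $\lvert\delta_\theta(\tau_{i+1})\rvert^\alpha$ as in (\ref{eq:prioritization}), exclusion requires $\tau_{i+1}$ to carry a small TD error. I would then point to two mechanisms that defeat this. The first is the early-training regime noted just before the Observation: while $\lvert R\rvert < \lvert\mathcal{B}\rvert$ the batch effectively contains the entire buffer, so even a low-priority $\tau_{i+1}$ is sampled and the shielding fails. The second is temporal correlation of TD errors along a trajectory: bootstrapping ties $\delta_\theta(\tau_i)$ and $\delta_\theta(\tau_{i+1})$ together, so a large error on $\tau_i$ tends to co-occur with a large error on $\tau_{i+1}$, making $\tau_{i+1}$ a high-priority sample precisely when it is harmful.

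The main obstacle I anticipate is that the statement is inherently a plausibility claim (``might not degrade''), so the argument is not a tight inequality but rather a case split plus an honest accounting of when each case occurs; the delicate part is arguing the second mechanism at the right level of rigor without overclaiming a deterministic relationship, since Lemma~\ref{lem:prop} and Theorem~\ref{thm:diverge_grad} assert proportionality only for \emph{some} transitions rather than all. I would therefore keep the conclusion qualitative, matching the existential phrasing of the preceding results, and emphasize that the favorable configuration requires the conjunction of divergence-on-$\tau_{i+1}$ and non-sampling-of-$\tau_{i+1}$, each individually uncommon and jointly rare in standard continuous-control off-policy training.
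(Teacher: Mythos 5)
Your argument follows essentially the same route as the paper: the divergence guaranteed by Theorem~\ref{thm:diverge_grad} can only be escaped when it falls on $\tau_{i+1}$ and $\tau_{i+1}$ is not itself drawn for the actor update, and the paper likewise dismisses this as a slight possibility on the grounds that the replay buffer initially contains fewer transitions than the batch size, so the subsequent transition is effectively always sampled. Your second mechanism (temporal correlation of TD errors along a trajectory making $\tau_{i+1}$ high-priority exactly when it is harmful) is a reasonable strengthening that the paper does not state, but it does not alter the structure of the argument.
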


We overcome the issue expressed in Corollary \ref{cor:diverge_grad} by optimizing the actor network with transitions that have small TD errors. Therefore, the requirement in Corollary \ref{cor:diverge_grad} can be achieved by inverse sampling from the prioritized replay buffer, i.e., sampling low TD error transitions for the actor network through the PER approach. For the implementation, the PER data structure should be analyzed. An efficient and the most popular implementation of PER, which also corresponds to the \textit{proportional prioritization}, is based on a ``sum-tree'' data structure. In principle, the sum-tree data structure used in PER is extremely similar to the binary heap's array representation. Instead of the standard heap property, a parent node's value is equal to the sum of its children. Internal nodes are intermediate sums, with the parent node carrying the sum over all priorities $p_{\text{total}}$. Leaf nodes store transition priorities, and internal nodes are intermediate sums. This allows for $\mathcal{O}(\log \lvert R \rvert)$ updates and sampling while calculating the cumulative total of priorities. The range $[0, p_{\text{total}}]$ is equally split into $N$ ranges to sample a mini-batch of size $N$. Then, each range is uniformly sampled for a value. Finally, the tree is queried for the transitions corresponding to each sampled value. \newline

An instinctive approach to sample transitions with a probability inversely proportional to the TD error accommodates creating a new sum tree containing the priorities' global inverse. Although priorities in vanilla PER are updated for every training step through a previously defined sum tree data structure, inverse sampling for actor updates requires creating a new sum tree prior to training. In every update step, priorities are calculated as the following:
\begin{equation}
    \label{inverse_eq_1}
    I \sim \Tilde{p}(\tau_{i}) = \frac{p_{\text{max}}}{p(\tau_{i})} =  \mathrm{max}_{i}\left(\frac{\mathrm{max}(\lvert\delta_{\theta}(\tau_{i})\rvert^{\alpha}, 1)}{\sum_{j}\mathrm{max}(\lvert\delta_{\theta}(\tau_{j})\rvert^{\alpha}, 1)}\right) \cdot \frac{\sum_{j}\mathrm{max}(\lvert\delta_{\theta}(\tau_{j})\rvert^{\alpha}, 1)}{\mathrm{max}(\lvert\delta_{\theta}(\tau_{i})\rvert^{\alpha}, 1)},
\end{equation}
where $p(\tau_{i})$ is the priority of the $i^{\text{th}}$ transition and $p_\mathrm{max}$ is the maximum of the stored transitions' previously determined priorities. As discussed in Remark \ref{rem:fujimoto_res}, the use of MSE with PER still induces varying biases that may favor outlier transitions. Hence, we adopt the prioritization scheme of LAP, expressed by (\ref{tttt}), in (\ref{inverse_eq_1}). Notice that (\ref{inverse_eq_1}) does not alter proportional prioritization. The relative proportions, e.g., the largest over the smallest, do not change as we take the inverse by multiplication.

This forms the fundamental component of our approach. To eliminate such bias in the Q-network counterpart, we employ the Huber loss with $\kappa = 1$ defined in (\ref{eq:huber_loss}) for the Q-network updates, the same in the LAP algorithm. Therefore, at every optimization step $t$, Q-network and priorities are updated, respectively, as:
\begin{gather}
    I \sim p(\tau_{i}) = \frac{\mathrm{max}(\lvert\delta_{\theta}(\tau_{i})\rvert^{\alpha}, 1)}{\sum_{j}\mathrm{max}(\lvert\delta_{\theta}(\tau_{j})\rvert^{\alpha}, 1)}; \\
    \theta \leftarrow \theta - \eta \cdot \frac{1}{\lvert I \rvert}\sum_{i \in I}\nabla_{\theta}\mathcal{L}_{\mathrm{Huber}}(\delta_{\theta}(\tau_{i})), \\
    p(\tau_{i}) \leftarrow \mathrm{max}(\lvert\delta_{\theta}(\tau_{i})\rvert^{\alpha}, 1)\text{ for $i \in I$},\label{eq:priority_upt}
\end{gather}
where $I$ are indices of the sampled batch of prioritized transitions. Note that the clipping reduces the likelihood of dead transitions when $p(\tau_{i}) \approx 0$, which eliminates the need for the $\mu$ parameter. We also note that the Huber loss cannot apply to the computation of the policy gradient. There are several reasons for this. First, the priorities are determined by the Q-network's loss, i.e., TD error, and the use of MSE loss in combination with TD error based prioritized sampling is the main cause for the mentioned outlier bias. Moreover, the policy loss and gradient are calculated by a class of policy gradient techniques that cannot be replaced. Hence, the outlier bias does not affect the policy gradient, and there is no need to employ Huber loss for the policy network.

\subsection{Optimizing the Actor and Critic with a Shared Set of Transitions}
As it may occur in our case, optimizing the actor and critic networks with entirely different transitions can violate the actor-critic theory. In general, features used by the critic network depend on the actor parameters and policy gradient since the actor determines the actions, and these actions lead to the observed state space \citep{konda}. A trivial corollary is that if the critic is updated by a set of features that lie in a state-action space in which the actor is never optimized, a substantial instability may occur since the transitions used by the critic are processed through the actor and the actor never sees those transitions \citep{konda}. Therefore, the reliability of critic's action evaluations might become questionable.

Intuitively, this situation can occur when inverse prioritized and prioritized sampling are used for the actor and critic networks, respectively, since they may never be optimized with the same transitions. This can be the case when the TD error of the critic's samples are not decreased such that the actor never sees them. We indicated that there is not always a direct correlation between TD and estimation errors. Hence, some of the transitions may initially have low TD errors. If the actor is optimized with respect to these low TD error transitions throughout the learning and the Q-network only focuses on the remaining large TD error transitions, samples used in the actor and critic training might not be the same. Although this remains a little possibility, we nevertheless overcome this by updating the actor and critic networks through a set of shared transitions, being a fraction of the sampled mini-batch of transitions. However, we cannot know the value of such a fraction, and we will introduce it as a hyper-parameter later. We emphasize these deductions in Observation \ref{cor:uniform} and regulate our approach accordingly. 
\begin{observation}
\label{cor:uniform}
    If the transitions for the actor and critic are sampled through inverse prioritized and prioritized sampling, respectively, they may never observe the same transitions, which violates the actor-critic theory \citep{konda} and induces instability in the learning. Hence, actor and critic should be optimized with the same set of transitions, at least for a fraction of the sampled mini-batch of transitions in every update step.
\end{observation}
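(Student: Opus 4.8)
The plan is to argue the Observation in three movements: first establish that disjoint sampling is genuinely possible, then invoke the convergence conditions of \citep{konda} to show such disjointness breaks the actor-critic coupling, and finally show that a shared fraction of the mini-batch restores the coupling. Since this is an Observation rather than a theorem, I would assemble a constructive possibility argument together with an appeal to the existing actor-critic convergence theory, rather than attempt a fully self-contained derivation.

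First I would formalize the two sampling distributions: let $p$ denote the prioritized distribution of (\ref{tttt}) used for the critic and $\tilde{p}$ the inverse-prioritized distribution of (\ref{inverse_eq_1}) used for the actor. The key leverage is the discussion following Lemma \ref{lem:prop}, namely that TD error is not always correlated with estimation error, so there can exist a subset of transitions whose TD error stays persistently small throughout learning. On this subset $\tilde{p}$ places nearly all of its mass while $p$ places almost none, and conversely on its complement. I would make ``may never observe the same transitions'' precise either by exhibiting an explicit buffer configuration whose high-TD-error and low-TD-error regions remain separated under the priority update (\ref{eq:priority_upt}), or, more weakly, by lower-bounding the probability that the actor's and critic's sampled index sets are disjoint across a window of updates. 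The essential point is that the high-probability supports of $p$ and $\tilde{p}$ can be made approximately disjoint and can stay so.

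Next I would connect disjointness to the coupling condition of \citep{konda}. The critic target $y(\tau)$ in (\ref{eq:q_target}) evaluates the actor at the next state through $a' \sim \pi_{\phi}(s')$, so the critic's fixed point is a functional of $\phi$; conversely the policy gradient (\ref{aaa})--(\ref{bbb}) weights $\partial\pi_{\phi}/\partial\phi$ by $Q_{\theta}$. Konda's analysis requires these to be evaluated consistently on the state-action distribution induced by the current policy. If the critic is trained exclusively on the high-TD-error region while the actor is optimized only on the low-TD-error region, then at the high-TD-error next-states the bootstrap action $\pi_{\phi}(s')$ is produced by a policy that was never optimized there; the critic's target is thus built on an arbitrary, untrained action selection, so the coupling that underlies Konda's guarantee fails and the critic's evaluations along the actor's gradient direction become unreliable, which I would argue manifests as the claimed instability.

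Finally I would establish the remedy. If a fraction $\lambda$ of each mini-batch is drawn uniformly, equivalently from a distribution with full support over the buffer, then every region including the high-TD-error one is visited by the actor with positive probability, so the actor is optimized on a representative slice of the states at which the critic bootstraps. This reinstates the overlap required by \citep{konda} and restores approximate consistency of the coupled updates, yielding the conclusion. I expect the main obstacle to be the middle step: rigorously turning ``disjoint supports'' into a formal violation of Konda's conditions, since that theorem is stated for a single consistent sampling distribution and does not directly cover two distinct, adversarially separated samplers. Bridging this gap cleanly is precisely why the statement is phrased as an Observation and why the shared fraction $\lambda$ is introduced as an empirically tuned hyper-parameter rather than derived in closed form.
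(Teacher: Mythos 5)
Your proposal follows essentially the same route as the paper: the paper supports this Observation not with a formal proof but with the informal argument in the surrounding text --- that persistently low-TD-error transitions can keep the supports of the prioritized and inverse-prioritized samplers disjoint, that this breaks the actor--critic coupling of \citep{konda} because the critic is then evaluated on transitions processed through an actor never optimized there, and that a uniformly sampled (full-support) fraction of the mini-batch restores the overlap --- together with empirical verification in the ablation study. Your honest acknowledgment that the middle step (turning disjoint supports into a formal violation of Konda's conditions) cannot be made rigorous is exactly the gap the paper also concedes by stating this as an Observation with an empirically tuned $\lambda$.
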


\paragraph{How to choose the set of shared transitions?} We inspect the following sampling alternatives in terms of the TD error. We can additionally investigate auxiliary variables driven by the components in model-based RL to compute the scores of the experiences, similar to the work by \cite{maper}. However, learning additional features introduces additional computational overhead, which is not in this study's scope since we only focus on prioritization regarding the TD error and corrections to vanilla PER in continuous action spaces.
\begin{itemize}
    \item \textbf{Transitions with large TD error:} The actor network cannot be optimized with experiences that have large TD error since the policy gradient significantly diverges from the actual gradient, as discussed in Corollary \ref{cor:diverge_grad}.
    \item \textbf{Transitions with small TD error:} Learning from the experiences with small TD error can be beneficial, yet they decrease the sampling efficiency and waste resources since the Q-network has little to learn from small TD error transitions in the sense of prioritized sampling.   
    \item \textbf{Uniformly sampled transitions:} The latter two alternatives imply that uniform sampling for the set of shared transitions remains the only choice. Although large TD error transitions might be included in the uniformly sampled mini-batch, their effects are reduced due to averaging in the mini-batch learning.
\end{itemize}

From our latter discussion, we infer that uniform sampling for a fraction of the transitions in the sampled mini-batch is the optimal and only solution to the issue highlighted in Observation \ref{cor:uniform}. We may also utilize transitions with an average magnitude of TD errors, yet, uniform sampling already corresponds to the transitions with mean TD error in the expectation. Moreover, relying on random sampling can include transitions that cannot be sampled by prioritized and inverse prioritized sampling.

As discussed in Remark \ref{rem:fujimoto_res}, the combination of Huber loss ($\kappa = 1$) with the prioritized sampling can eliminate outlier bias in the LAP algorithm. \cite{lap} also introduced the mirrored loss function of LAP, with an equivalent expected gradient, for uniform sampling from the experience replay buffer. To observe the same benefits of LAP also in the uniform sampling counterpart, its mirrored loss function, Prioritized Approximate Loss (PAL), should be employed instead of MSE. Similar to the case of the LAP function, the PAL loss is also not employed in the policy network's updates. The PAL function is expressed as:
\begin{align}
    \label{eq:pal}
    \xi &= \frac{\sum_{j}\mathrm{max}(\lvert\delta_{\theta}(\tau_{j})\rvert^{\alpha}, 1)}{N}; \\
    \mathcal{L}_{\mathrm{PAL}}(\delta_{\theta}(\tau_{i})) &= \frac{1}{\xi}\left\{
        \begin{array}{ll}
            0.5\delta_{\theta}(\tau_{i})^{2} & \quad \text{if $\lvert\delta_{\theta}(\tau_{i})\rvert \leq 1$}, \\
            \frac{\lvert\delta_{\theta}(\tau_{i})\rvert^{1 + \alpha}}{1 + \alpha} & \quad \text{otherwise}.
        \end{array}
        \right.
\end{align}

\begin{remark}
\label{rem:pal}
   To eliminate the outlier bias in the uniform sampling counterpart, the Prioritized Approximate Loss (PAL) function should be employed, which has the same expected gradient as the Huber loss when combined with PER \citep{lap}.
\end{remark}

\begin{algorithm*}[htbp]
    \caption{Actor-Critic with Loss-Adjusted Approximate Actor Prioritized Experience Replay (LA3P)}
    \begin{algorithmic}[1]
    \STATE \textbf{Input:} Mini-batch size $N$, exponents $\alpha$ and $\beta$, uniform sampling fraction $\lambda$, target step-size $\zeta$, and actor and critic step-sizes $\eta_{\pi}$ and $\eta_{Q}$
    \STATE Initialize actor $\pi_{\phi}$ and critic $Q_{\theta}$ networks, with random parameters $\phi$ and $\theta$
    \STATE Initialize target networks $\phi^{\prime} \leftarrow \phi$, $\theta^{\prime} \leftarrow \theta$, if required
    \STATE Initialize $p_{\text{init}} = 1$ and the experience replay buffer $R = \emptyset$
    \FOR{$t = 1$ \textbf{to} $T$}
        \STATE Select action $a_{t}$ and observe reward $r_{t}$ and new state $s_{t + 1}$
        \STATE Store the transition tuple $\tau_{t} = (s_{t}, a_{t}, r_{t}, s_{t + 1})$ in $R$ with initial priority $p_{t} = p_{\text{init}}$
        \FOR{each update step}
            \STATE Uniformly sample a mini-batch of transitions: $I \sim p(\tau_{i}) = \frac{1}{\lvert R \rvert}; \quad \lvert I \rvert = \lambda \cdot N$
            \STATE Optimize the critic network: $\theta \leftarrow \theta - \eta_{Q} \cdot \frac{1}{\lvert I \rvert}\sum_{i \in I}\nabla_{\theta}\mathcal{L}_{\mathrm{PAL}}(\delta_{\theta}(\tau_{i}))$
            \STATE Compute the policy gradient $\nabla\phi(\tau_{i})$ for $\tau_{i}$ where $i \in I$
            \STATE Optimize the actor network: $\phi \leftarrow \phi + \eta_{\pi} \cdot \frac{1}{\lvert I \rvert}\sum_{i \in I}\nabla\phi(\tau_{i})$
            \STATE Update the priorities of the uniformly sampled transitions: $p(\tau_{i}) \leftarrow \mathrm{max}(\lvert\delta_{\theta}(\tau_{i})\rvert^{\alpha}, 1)$ for $i \in I$
            \STATE Update target networks if required: $\theta^{\prime} \leftarrow \zeta\theta + (1 - \zeta)\theta^{\prime}$, $\phi^{\prime} \leftarrow \zeta\phi + (1 - \zeta)\phi^{\prime}$
            \STATE Sample a mini-batch of transitions through prioritized sampling: \\ $I \sim p(\tau_{i}) = \frac{\mathrm{max}(\lvert\delta_{\theta}(\tau_{i})\rvert^{\alpha}, 1)}{\sum_{j}\mathrm{max}(\lvert\delta_{\theta}(\tau_{j})\rvert^{\alpha}, 1)}; \quad \lvert I \rvert = (1 - \lambda) \cdot N$
            \STATE Optimize the critic network: $\theta \leftarrow \theta - \eta_{Q} \cdot \frac{1}{\lvert I \rvert}\sum_{i \in I}\nabla_{\theta}\mathcal{L}_{\mathrm{Huber}}(\delta_{\theta}(\tau_{i}))$
            \STATE Update the priorities of the prioritized transitions: $p(\tau_{i}) \leftarrow \mathrm{max}(\lvert\delta_{\theta}(\tau_{i})\rvert^{\alpha}, 1)$ for $i \in I$
            \STATE Sample a mini-batch of transitions through inverse prioritized sampling: \\ $I \sim \Tilde{p}(\tau_{i}) = \frac{p_{\text{max}}}{p(\tau_{i})} =  \mathrm{max}_{i}\left(\frac{\mathrm{max}(\lvert\delta_{\theta}(\tau_{i})\rvert^{\alpha}, 1)}{\sum_{j}\mathrm{max}(\lvert\delta_{\theta}(\tau_{j})\rvert^{\alpha}, 1)}\right) \cdot \frac{\sum_{j}\mathrm{max}(\lvert\delta_{\theta}(\tau_{j})\rvert^{\alpha}, 1)}{\mathrm{max}(\lvert\delta_{\theta}(\tau_{i})\rvert^{\alpha}, 1)}$
            \STATE Compute the policy gradient $\nabla\phi(\tau_{i})$ for $\tau_{i}$ where $i \in I$
            \STATE Optimize the actor network: $\phi \leftarrow \phi + \eta_{\pi} \cdot \frac{1}{\lvert I \rvert}\sum_{i \in I}\nabla\phi(\tau_{i})$
            \STATE Update target networks if required: $\theta^{\prime} \leftarrow \zeta\theta + (1 - \zeta)\theta^{\prime}$, $\phi^{\prime} \leftarrow \zeta\phi + (1 - \zeta)\phi^{\prime}$
        \ENDFOR
    \ENDFOR
    \end{algorithmic}
    \label{alg:la3p}
\end{algorithm*}

Having the latter component included, this forms our PER correction algorithm, \textbf{L}oss-\textbf{A}djusted \textbf{A}pproximate \textbf{A}ctor \textbf{P}rioritized Experience Replay (LA3P). To summarize our approach, in each update step, a mini-batch of transitions of size $\lambda \cdot N$ is uniformly sampled, where $\lambda \in [0, 1]$ denotes the fraction of the uniformly sampled transitions, being the only hyper-parameter introduced by our approach. With the uniformly sampled batch, critic and actor networks are optimized, respectively. The critic is updated with respect to the PAL function expressed in (\ref{eq:pal}), and the priorities are updated after that. Next, $(1 - \lambda) \cdot N$ number of transitions are sampled for the critic and actor networks through prioritized and inverse prioritized sampling, respectively. Then, the critic is optimized by Huber loss ($\kappa = 1$) expressed in (\ref{eq:huber_loss}), and a policy gradient technique optimizes the actor. Finally, the priorities are again updated. In total, actor and critic networks are optimized by $N$ transitions per update step, as in standard off-policy actor-critic algorithms. Note that we update the priorities also in the uniform counterpart since the score of the transitions should be up-to-date whenever possible. In addition, the order of the prioritized and uniform updates does not alter the expected gradient. Hence, it does not matter which of them is used first. Nevertheless, in our implementation and experiments, we employ the structure outlined in Algorithm \ref{alg:la3p}, denoted through a generic application to off-policy actor-critic methods. A clear summary of the LA3P framework is also depicted in Appendix \ref{fig:simple_la3p}. 

Lastly, we perform a complexity analysis for our approach. The LA3P framework introduces an additional sum tree, the LAP, and PAL functions on top of vanilla PER. As LAP and PAL operate on the sampled batches of transitions, the computational complexity introduced by these modifications is dominated by the additive sum tree, which operates on the entire replay buffer. Moreover, the additive sum tree requires a priority update. Setting the priorities of the nodes has the same complexity as in PER. Additionally, LA3P takes the inverse of the priorities by multiplication, which takes $\mathcal{O}(\lvert R \vert)$ run time. Therefore, LA3P operates in $\mathcal{O}(\log \lvert R \rvert) + \mathcal{O}(\lvert R \rvert)$. As $\mathcal{O}(\lvert R \rvert)$ dominates $\mathcal{O}(\log \lvert R \rvert)$, we conclude that LA3P has a run time of $\mathcal{O}(\lvert R \rvert)$ in the worst case scenario.

Although the array division in the additive sum tree, i.e., taking the inverse of the priorities by multiplication, dramatically increases the computational complexity of vanilla PER and may question the feasibility of our approach, it can be overcome by Single Instruction, Multiple Data (SIMD) structure supported by CPUs introduced recently. Exceptionally, SIMD instructions perform the same operation, such as the simple array division in our case, on all cores in parallel. Fortunately, this is not the user's concern and can be executed implicitly by the CPU. Therefore, we believe that the computational burden of the LA3P framework will be significantly reduced by the additional computational efficiency offered by SIMD instructions.

\section{Experiments}
\subsection{Experimental Details}
With all the mentioned concepts combined, we investigate to what extent our prioritization framework can improve the performance of off-policy actor-critic methods in continuous control. Thus, we perform experiments to evaluate the effectiveness of LA3P on the standard suite of MuJoCo \citep{mujoco} and Box2D \citep{box2d} continuous control tasks interfaced by OpenAI Gym. We combine our method with the state-of-the-art actor-critic algorithms, Twin Delayed Deep Deterministic Policy Gradient (TD3) \citep{td3} and Soft Actor-Critic (SAC) \citep{sac}, which we benchmark against uniform sampling, PER, and the rule-based PER correction methods of LAP and MaPER. PAL combined with uniform sampling could also be used to compare our method, however, it has the same expected gradient as LAP, as stated by \cite{lap}. Thus, we would expect the same empirical performance.

Our implementation of the state-of-the-art algorithms closely follows the hyper-parameter setting and architecture outlined in the original papers. Particularly, we implement TD3 using the code from the author's GitHub repository\footnote{\url{https://github.com/sfujim/TD3}\label{td3_repo}}, which contains the fine-tuned version of the algorithm. The implementation of SAC is precisely based on the original paper. Unlike the paper, we include entropy tuning, as shown by \cite{sac_applications} to improve the algorithm's overall performance. Moreover, we add 25000 exploration time steps before the training to increase the data efficiency as indicated by \cite{td3}.

We use the LAP and PAL code in the author's GitHub repository\footnote{\url{https://github.com/sfujim/LAP-PAL}\label{lap_repo}} to implement the algorithm and our framework, which requires a few lines on top of the standard PER implementation. Moreover, we use the same repository for the PER implementation, which is based on proportional prioritization through sum trees. Specifically, LA3P is implemented by cascading uniform sampling combined with PAL and PER combined with LAP. For all experience replay sampling algorithms except for uniform, we use $\beta = 0.4$, while we set $\alpha = 0.6$ and $\alpha = 0.4$ for PER and LAP, respectively, as indicated in the original papers. Since we use LAP and PAL functions in our framework, we also set $\alpha = 0.4$ for LA3P. Finally, the code in the submission website\footnote{\url{https://openreview.net/forum?id=WuEiafqdy9H}\label{maper_repo}} of MaPER is used to implement the algorithm. Exact hyper-parameter settings, architecture, and implementation are broadly explained in Appendix \ref{app:exp_details}.

Each method is trained for a million steps over ten random seeds of network initialization, simulators, and dependencies. Every 1000 steps, each method is evaluated in a distinct evaluation environment (training seed + constant) for ten episodes, where no exploration and learning are performed. To construct the reported learning curves, we report the average of these ten evaluation episodes occurring at each evaluation interval. For easy reproducibility and fair evaluation, we did not modify the simulators' environment dynamics or reward functions. Computing infrastructure, e.g., hardware and software, used to produce the reported results are summarized in our repository\footref{our_repo}. Detailed experimental setup is provided in Appendix \ref{app:exp_details}. 

\subsection{Comparative Evaluation}
Learning curves for the set of OpenAI Gym continuous control benchmarks are reported in Figure \ref{fig:sac_results} and \ref{fig:td3_results} for the SAC and TD3 algorithms, respectively. For all tasks, we use uniform fraction value of $\lambda = 0.5$. Initially, we tested $\lambda = \{0.1, 0.3, 0.5, 0.7, 0.9\}$ on the Ant, HalfCheetah, Humanoid, and Walker2d tasks and found that $\lambda = 0.5$ produced the best results. In the next section, we also present a sensitivity analysis based on the $\lambda$ parameter. Empirical complexity analysis is provided in Appendix \ref{sec:emp_comp}.

We additionally report the average of the last ten evaluation returns, i.e., the level where the algorithms converge, in Table \ref{tab:eval}. Note that for some of the tasks, e.g., HalfCheetah, Hopper, and Walker2d, the baseline competing algorithms performed worse than what was reported in the original articles. This is due to the stochasticity of the simulators and used random seeds. Nonetheless, regardless of where the baselines converge, the performance disparity between the competing approaches would practically remain the same if we employed different sets of random seeds. As a result, our comparative analyses are fair by the deep RL benchmarking standards \citep{deep_rl_that_matters}.

\begin{figure*}[!t]
    \centering
    \begin{align*}
        &\text{{\orange} SAC + LA3P ($\lambda$ = 0.5)}  &&\text{{\green} SAC + PER} &&\text{{\brown} SAC + LAP} \\
        &\text{{\purple} SAC + MaPER} &&\text{{\blue} SAC + uniform}
    \end{align*}
	\subfigure{
		\includegraphics[width=1.55in, keepaspectratio]{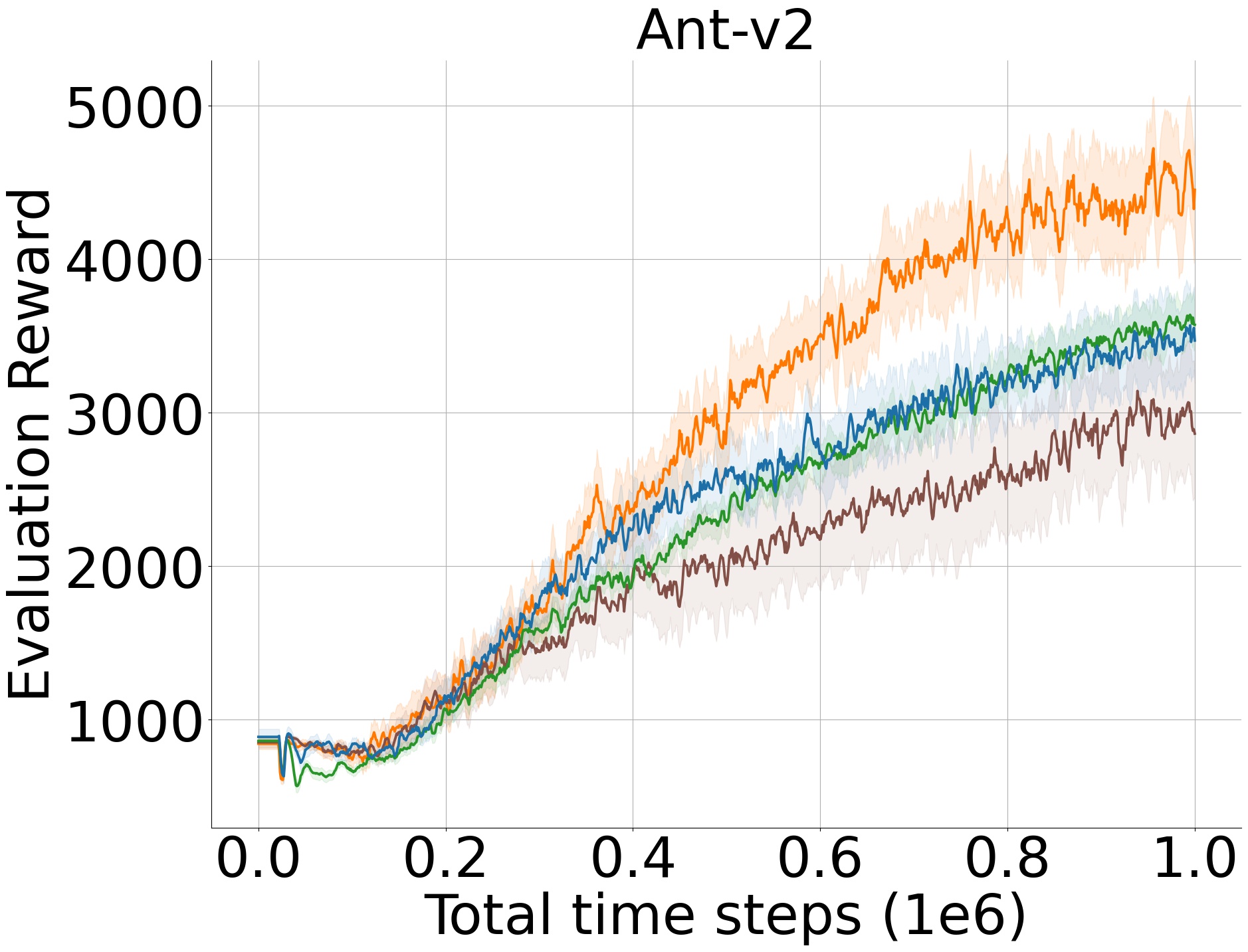}
		\includegraphics[width=1.55in, keepaspectratio]{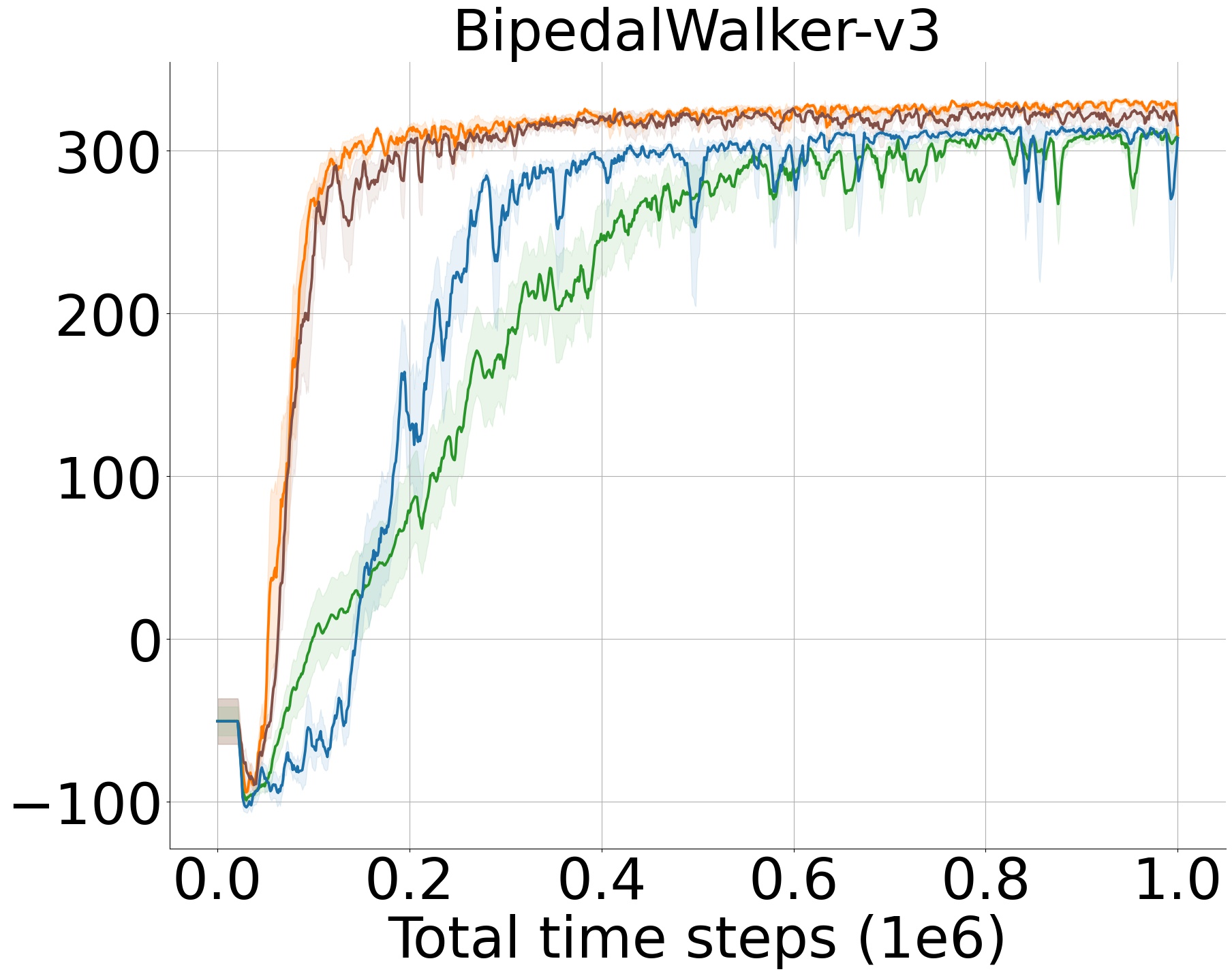}
		\includegraphics[width=1.55in, keepaspectratio]{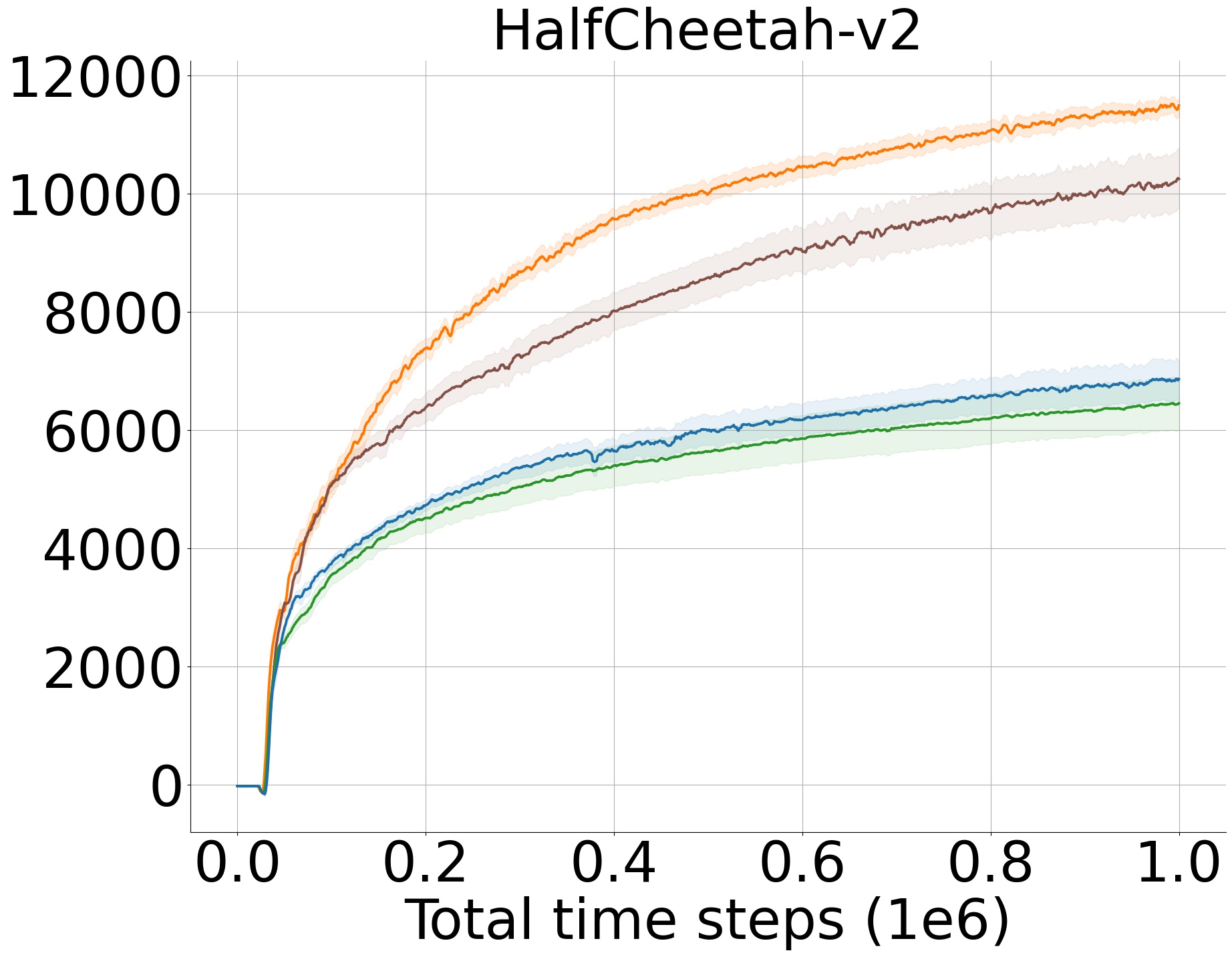}
		\includegraphics[width=1.55in, keepaspectratio]{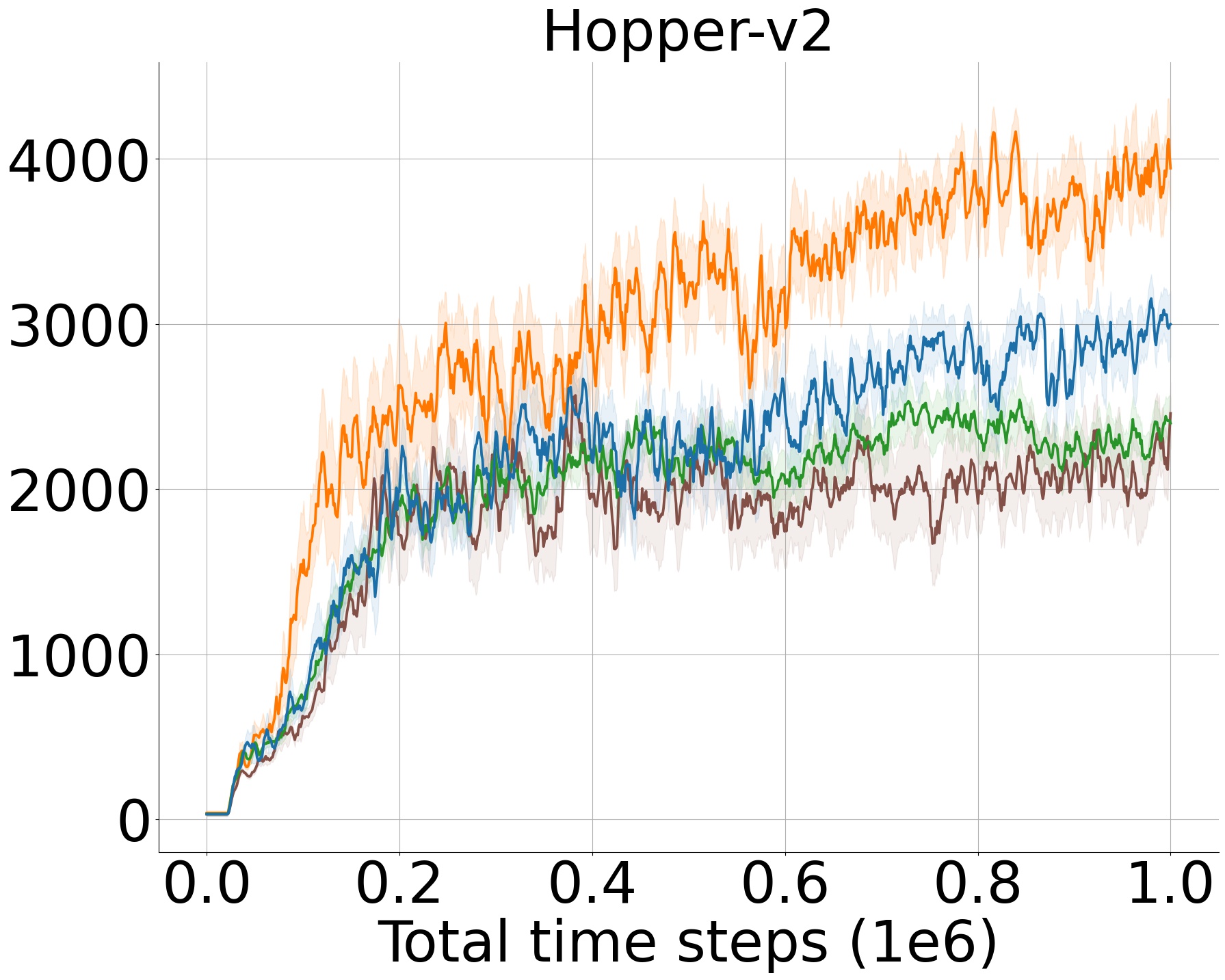}
	} \\
	\subfigure{
	    \includegraphics[width=1.55in, keepaspectratio]{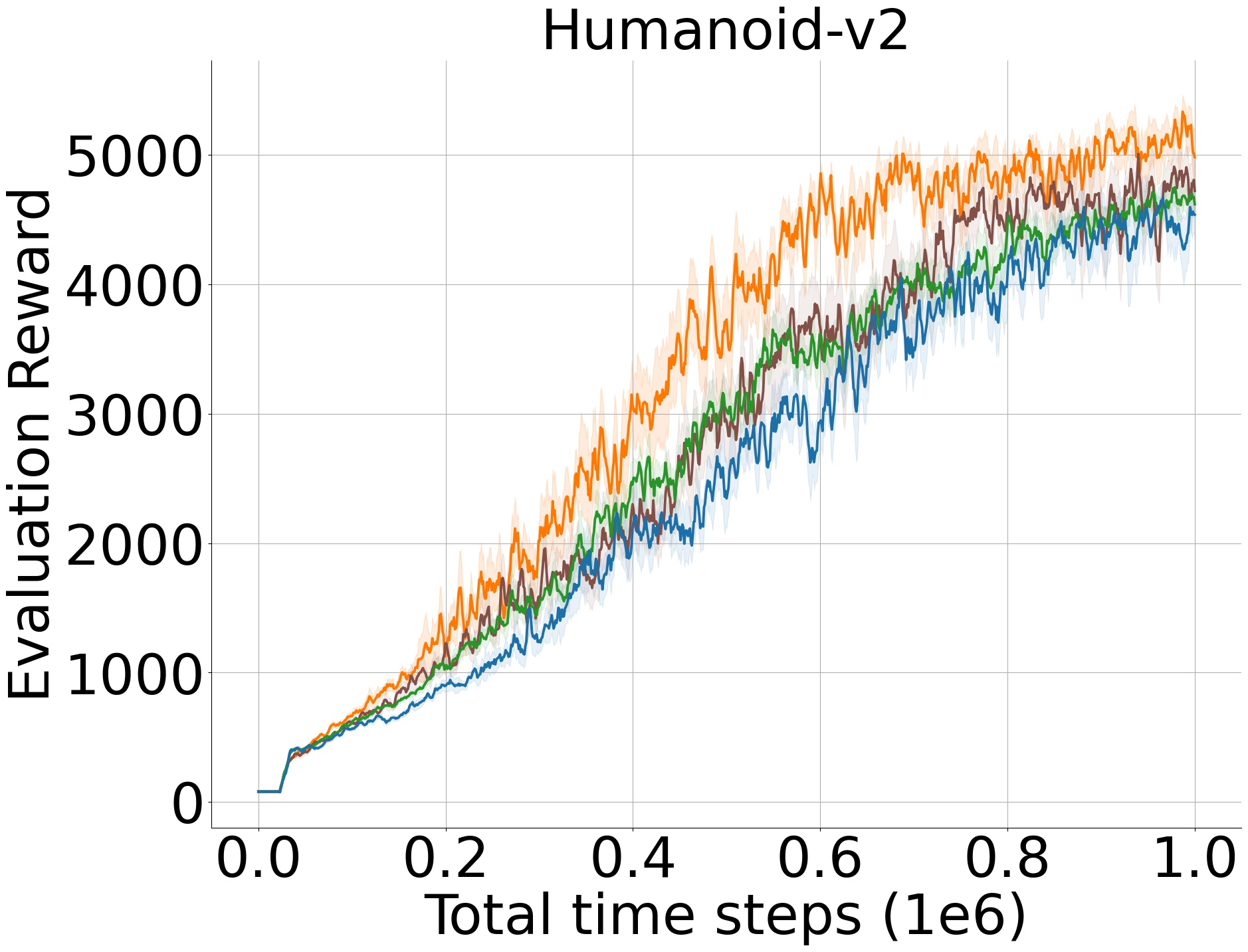}
		\includegraphics[width=1.55in, keepaspectratio]{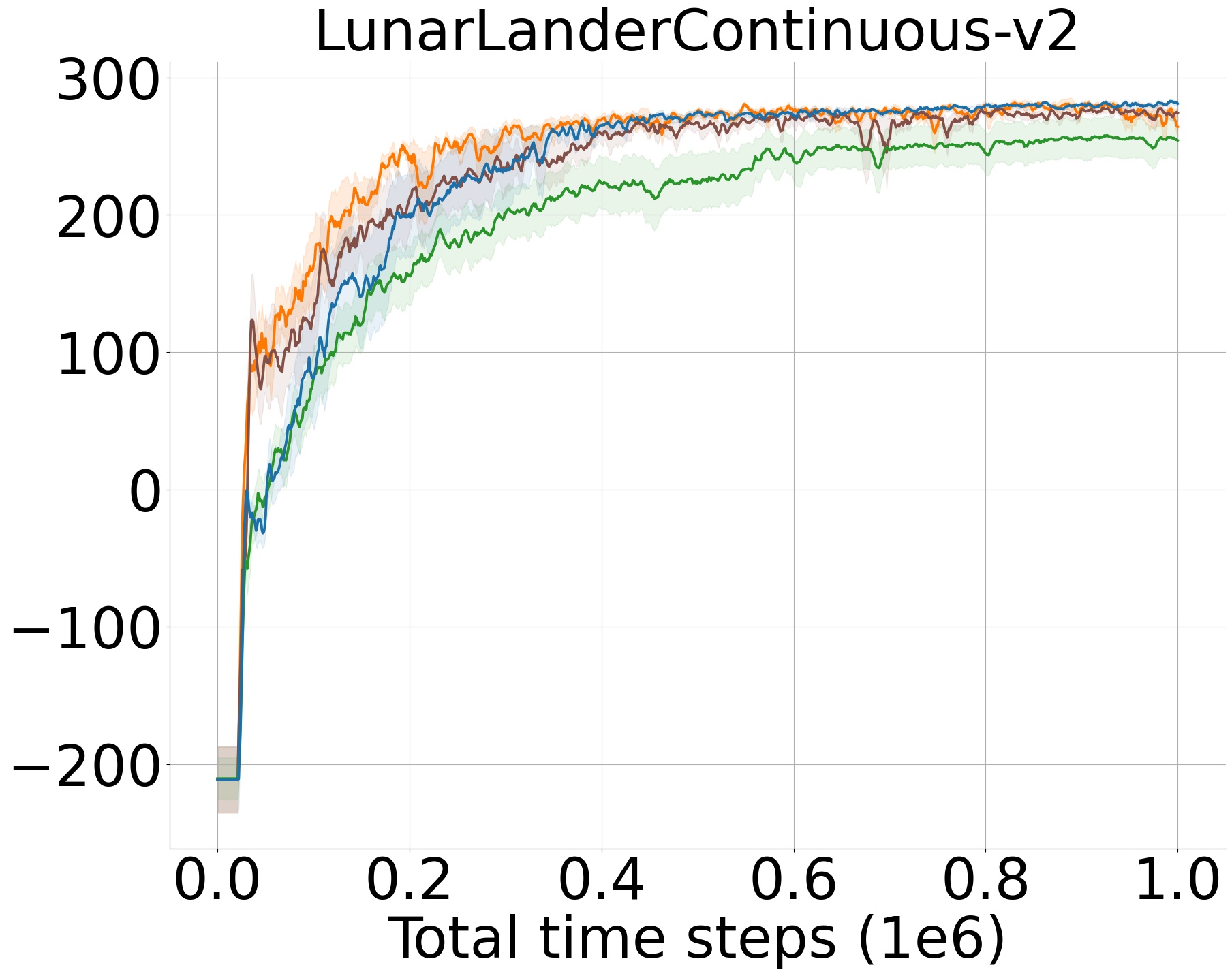}
		\includegraphics[width=1.55in, keepaspectratio]{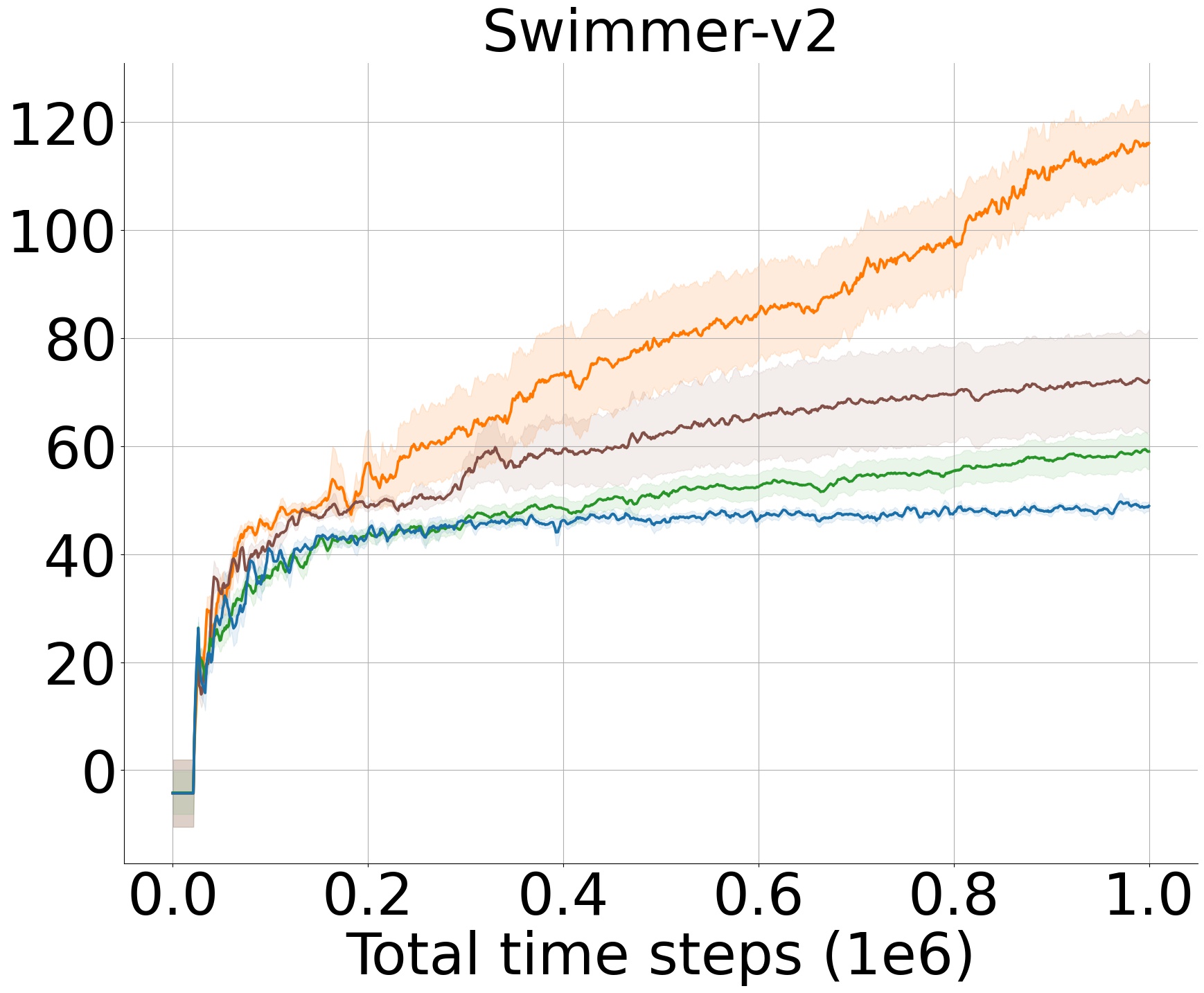}
		\includegraphics[width=1.55in, keepaspectratio]{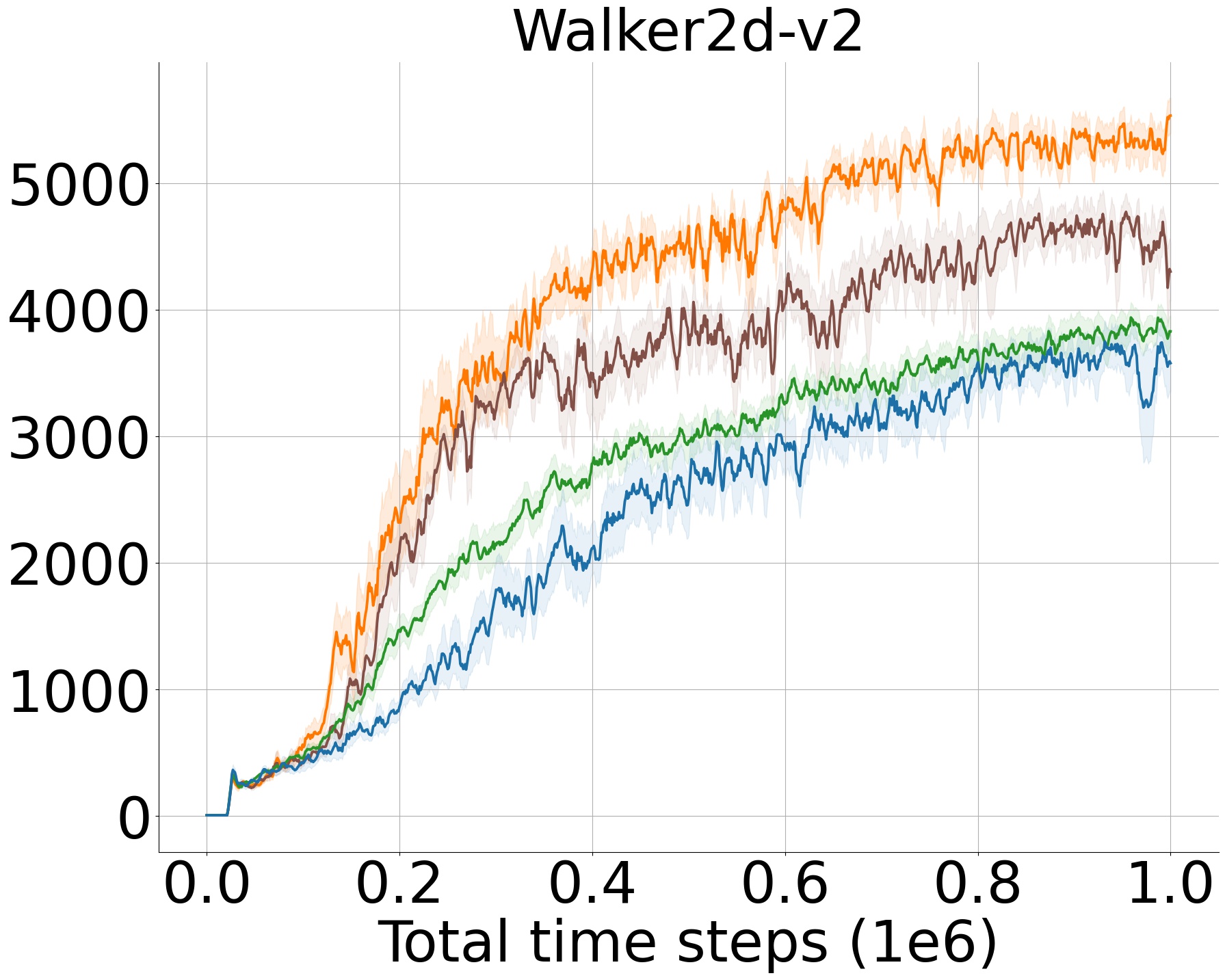}
    }
	\caption{Learning curves for the set of MuJoCo and Box2D continuous control tasks under the SAC algorithm. The shaded region represents a 95\% confidence interval over the trials. A sliding window of size 5 smoothes the curves for visual clarity.}
	\label{fig:sac_results}
\end{figure*}

\begin{figure*}[!hbt]
    \centering
    \begin{align*}
        &\text{{\orange} TD3 + LA3P ($\lambda$ = 0.5)}  &&\text{{\green} TD3 + PER} &&\text{{\brown} TD3 + LAP} \\
        &\text{{\purple} TD3 + MaPER} &&\text{{\blue} TD3 + uniform}
    \end{align*}
	\subfigure{
		\includegraphics[width=1.55in, keepaspectratio]{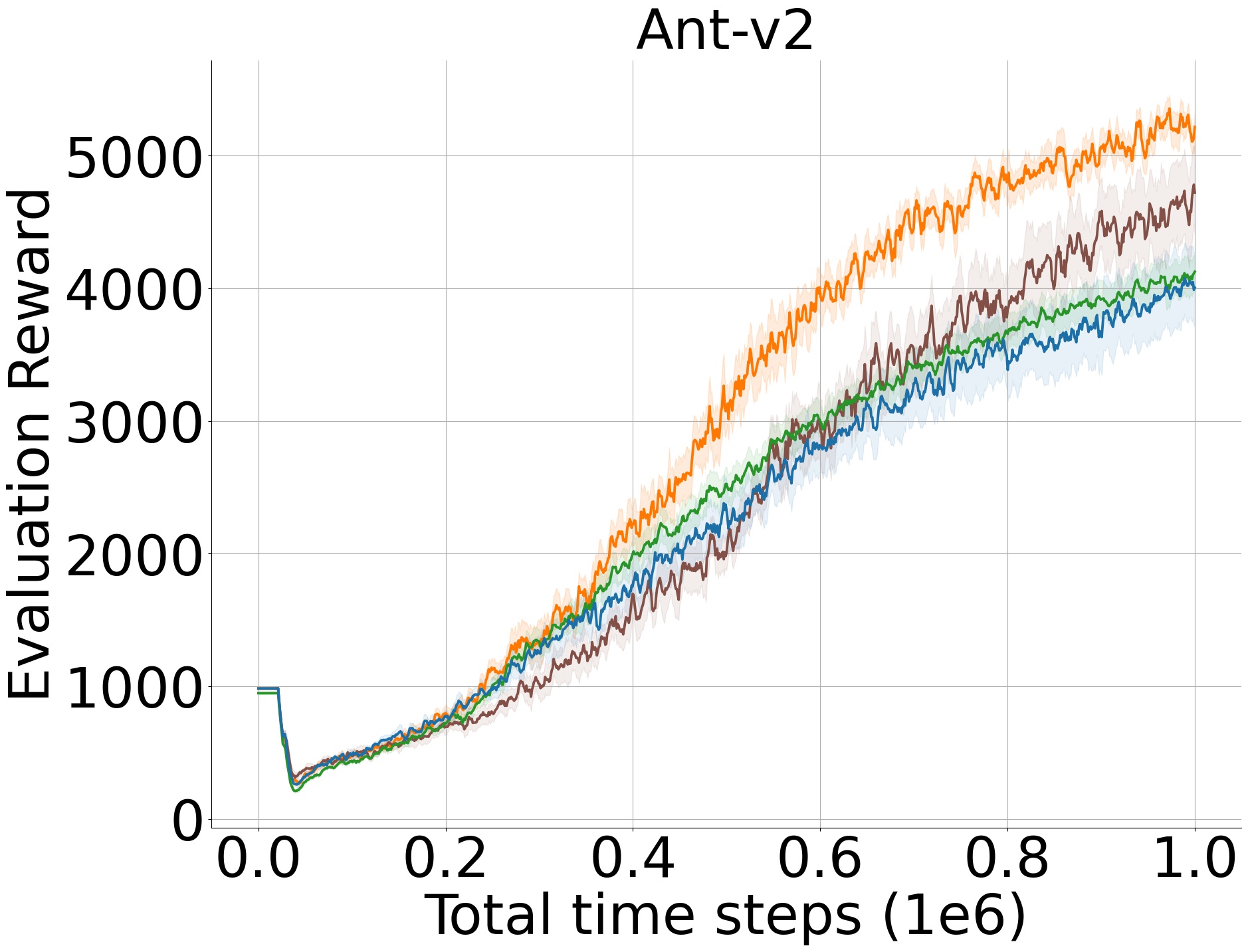}
		\includegraphics[width=1.55in, keepaspectratio]{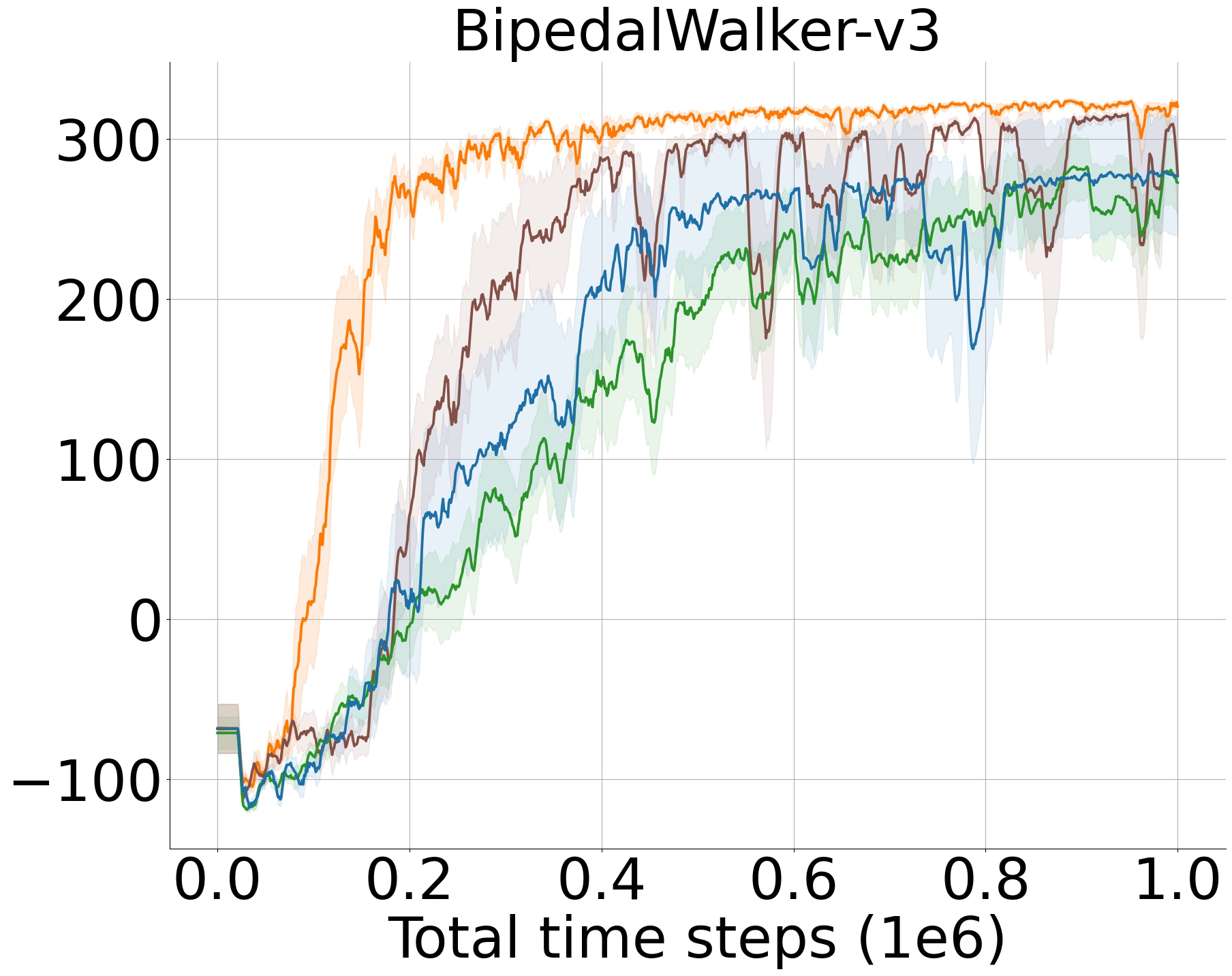}
		\includegraphics[width=1.55in, keepaspectratio]{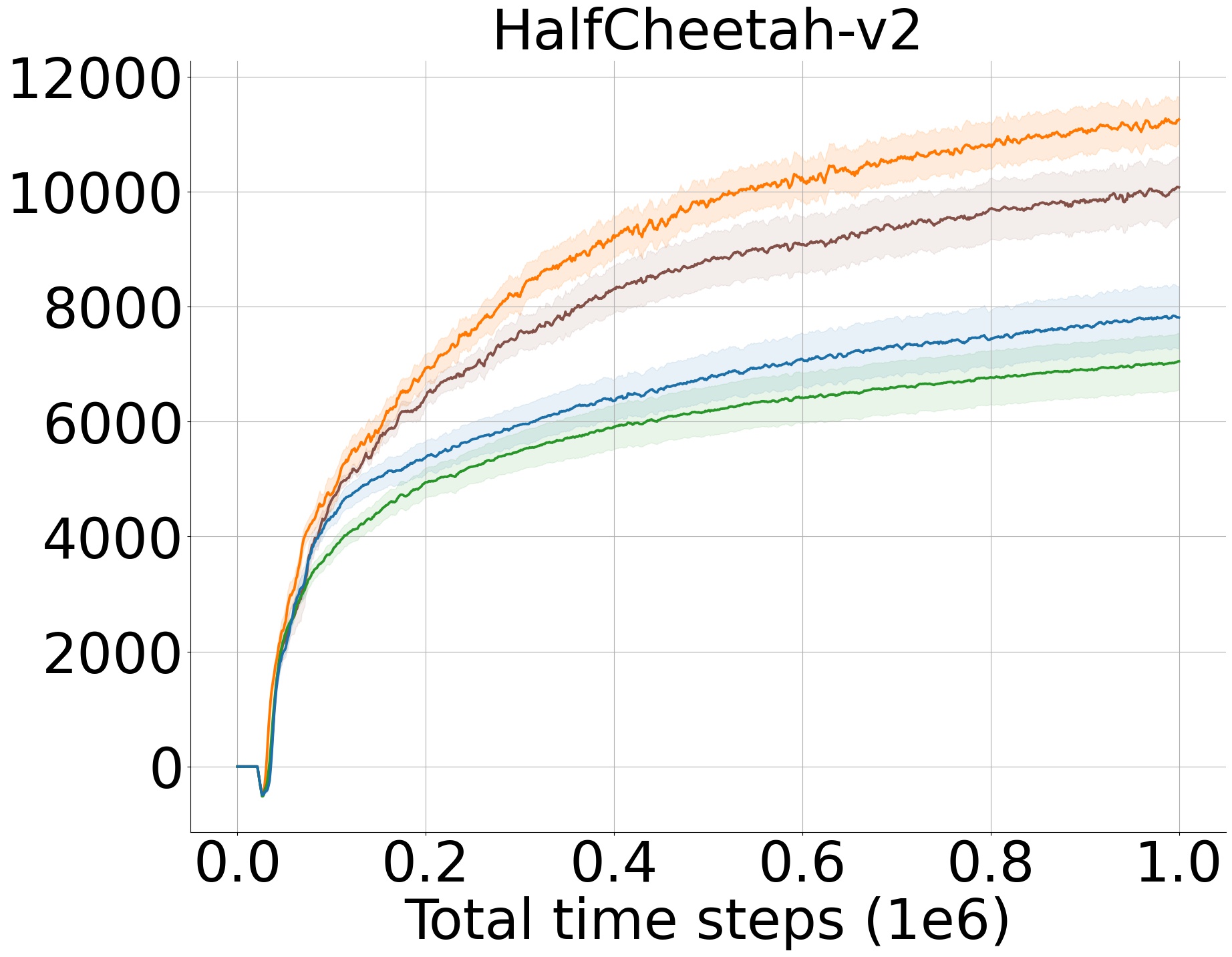}
		\includegraphics[width=1.55in, keepaspectratio]{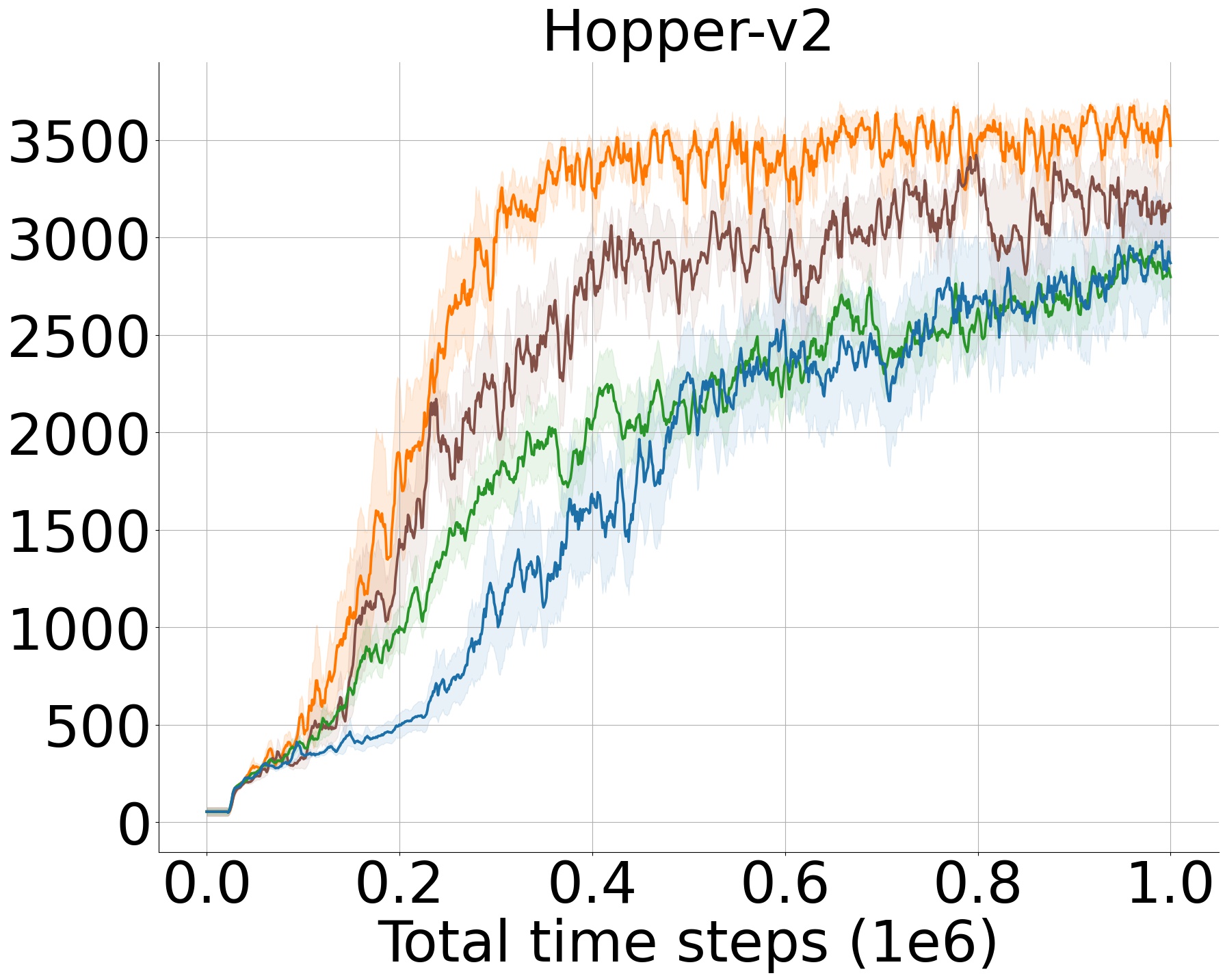}
	} \\
	\subfigure{
	    \includegraphics[width=1.55in, keepaspectratio]{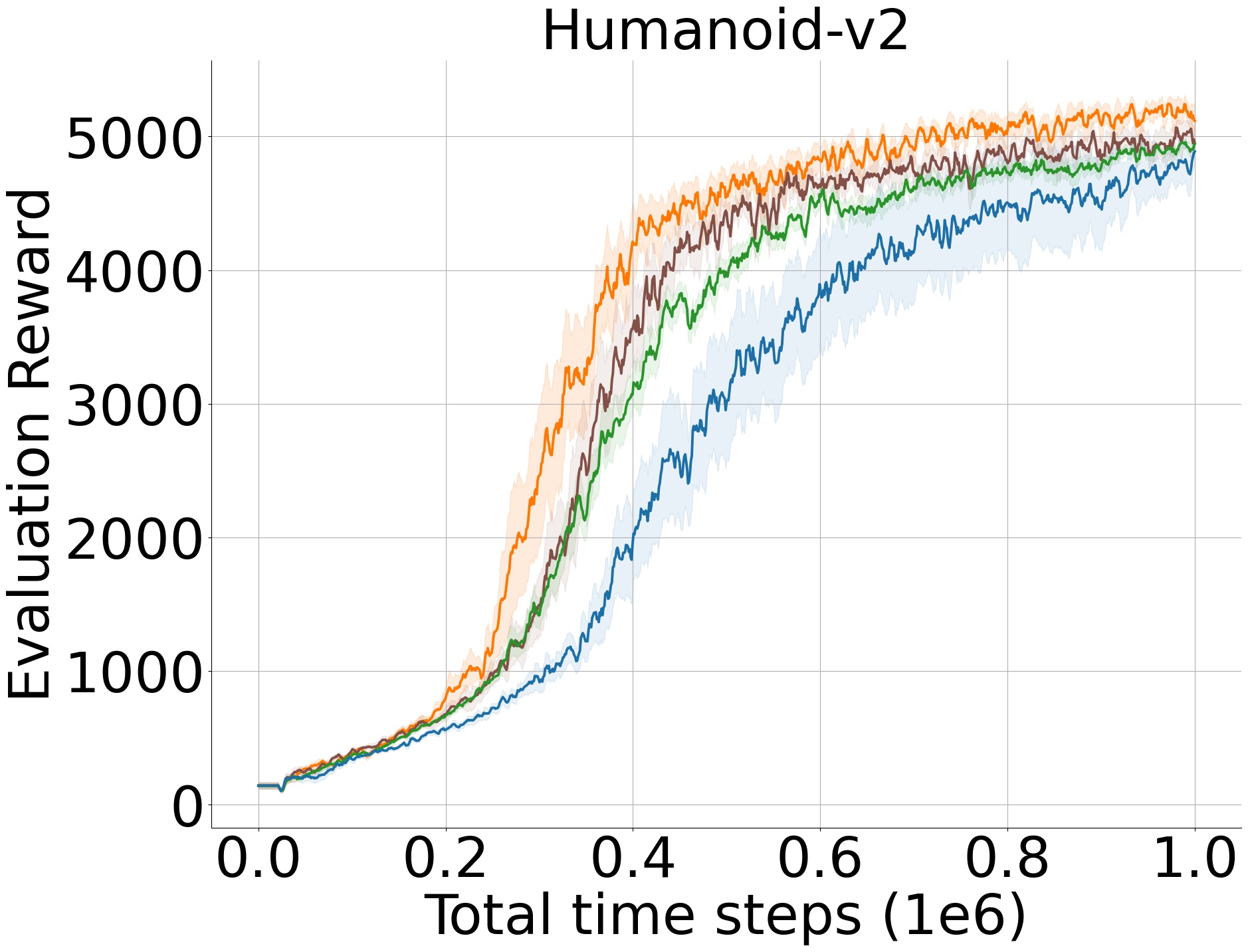}
		\includegraphics[width=1.55in, keepaspectratio]{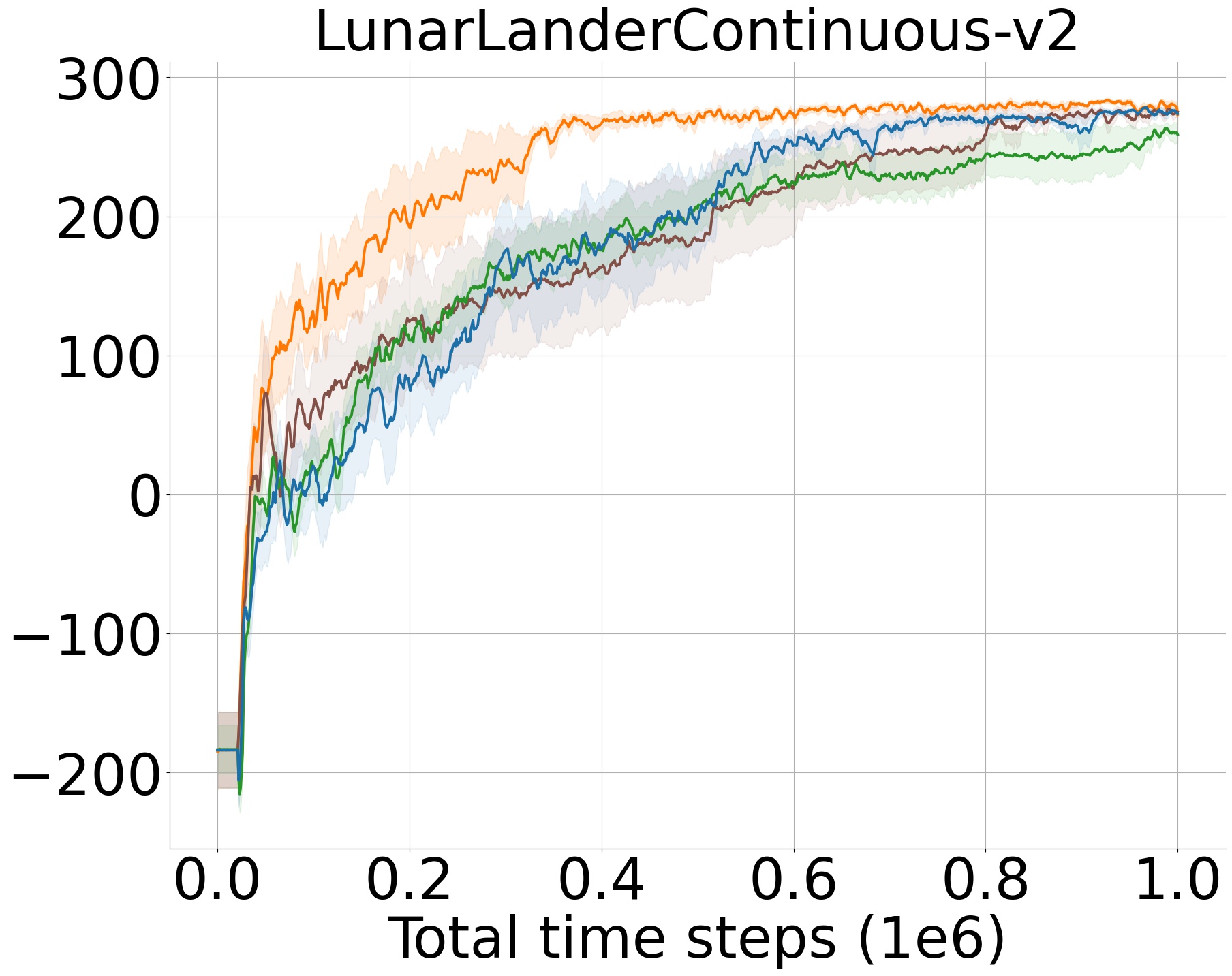}
		\includegraphics[width=1.55in, keepaspectratio]{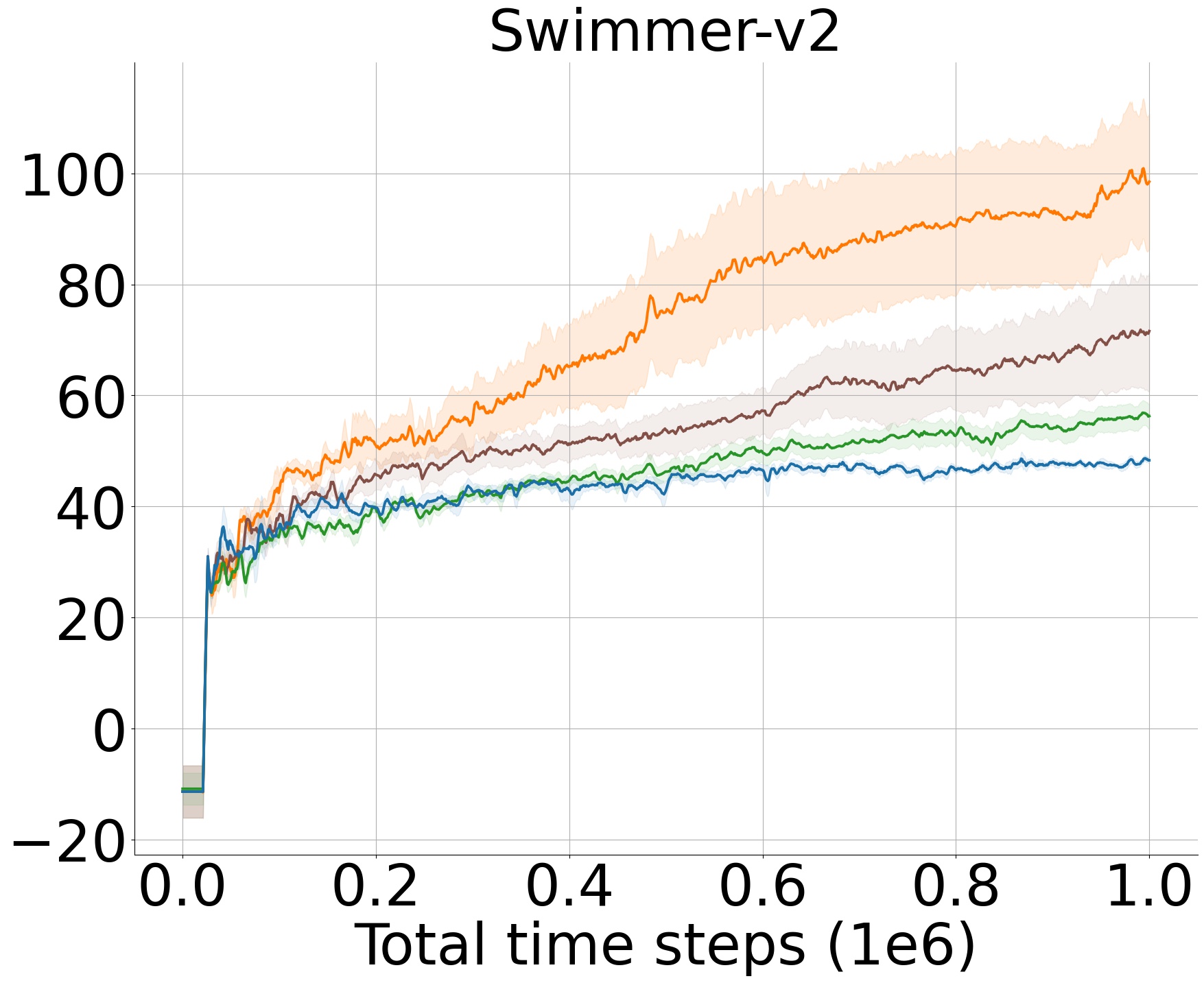}
		\includegraphics[width=1.55in, keepaspectratio]{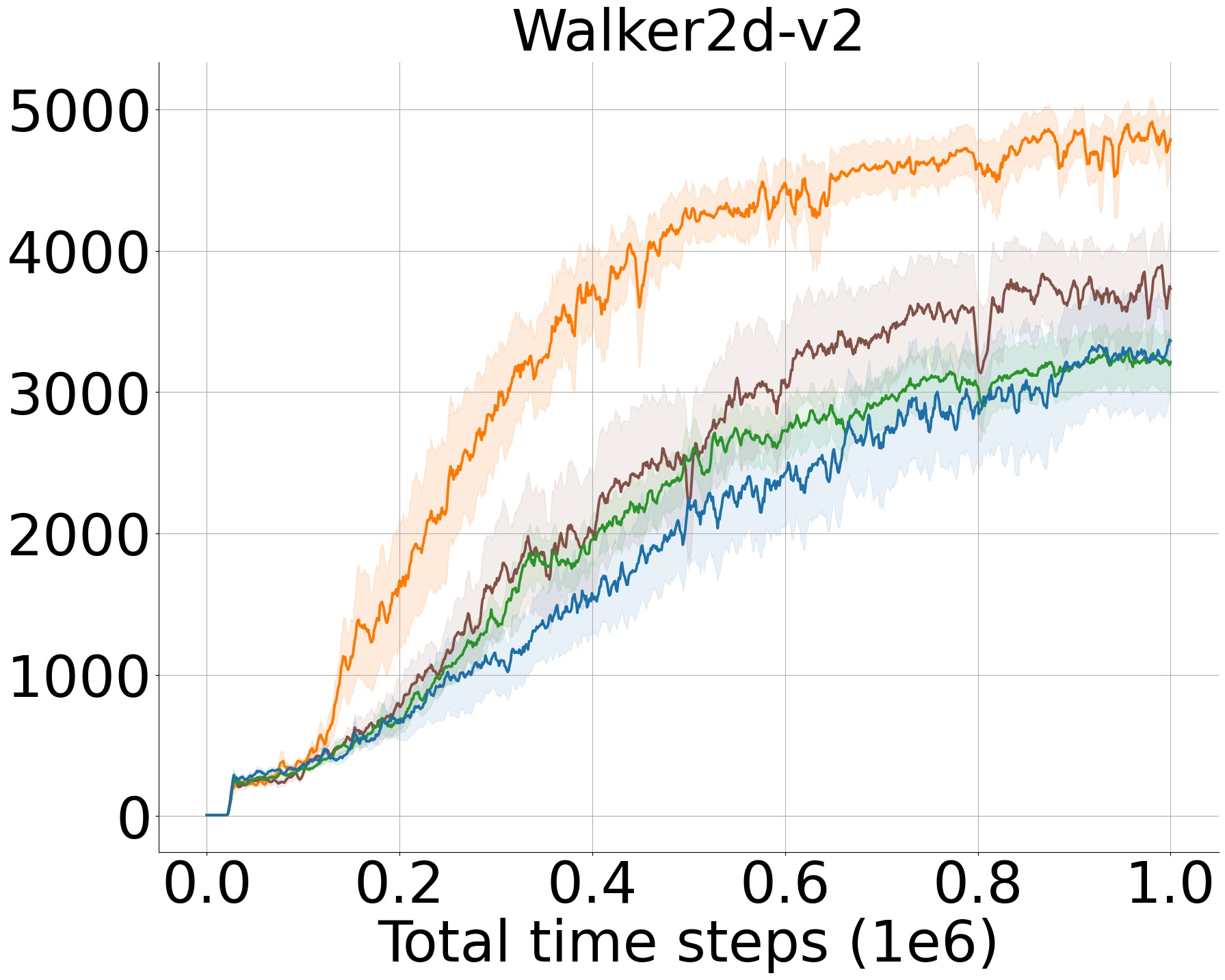}
    }
	\caption{Learning curves for the set of MuJoCo and Box2D continuous control tasks under the TD3 algorithm. The shaded region represents a 95\% confidence interval over the trials. A sliding window of size 5 smoothes the curves for visual clarity.}
	\label{fig:td3_results}
\end{figure*}

\label{sec:ab_stud}
\begin{sidewaystable*}[!hbpt]
    \centering
    \scalebox{0.8}{
    \begin{tabular}[hbpt]{lcccccccc}
    \toprule%
    \textbf{Method}  & \textbf{Ant} & \textbf{BipedalWalker} & \textbf{HalfCheetah} & \textbf{Hopper} & \textbf{Humanoid} & \textbf{LunarLanderContinuous} & \textbf{Swimmer} & \textbf{Walker2d}  \\        
    \midrule
    SAC + LA3P & \textbf{4539.10 $\pm$ 810.23} & 318.91 $\pm$ 27.87 & \textbf{11485.46 $\pm$ 339.89} & \textbf{3917.77 $\pm$ 727.46} & \textbf{5094.80 $\pm$ 518.17} & 269.06 $\pm$ 16.11 & \textbf{115.83 $\pm$ 14.59} & \textbf{5449.14 $\pm$ 370.56} \\
    SAC + LAP & 2934.10 $\pm$ 907.89 & \textbf{320.05 $\pm$ 15.72} & 10223.23 $\pm$ 1016.31 & 2324.29 $\pm$ 532.27 & 4726.08 $\pm$ 632.85 & \textbf{272.66 $\pm$ 11.19} & 72.02 $\pm$ 18.49 & 4403.11 $\pm$ 742.64 \\
    SAC + MaPER & 3675.63 $\pm$ 376.33 & 308.03 $\pm$ 20.05 & 8270.91 $\pm$ 326.49 & 2840.91 $\pm$ 289.27 & 4821.05 $\pm$ 272.15 & 265.22 $\pm$ 17.96 & 70.99 $\pm$ 6.86 & 4243.83 $\pm$ 283.88 \\
    SAC + PER & 3605.65 $\pm$ 333.07 & 306.44 $\pm$ 10.54 & 6444.57 $\pm$ 897.16 & 2420.24 $\pm$ 335.34 & 4652.81 $\pm$ 279.68 & 254.92 $\pm$ 28.54 & 59.17 $\pm$ 6.52 & 3830.46 $\pm$ 295.52 \\
    SAC + Uniform & 3519.85 $\pm$ 634.14 & 290.44 $\pm$ 51.82 & 6845.84 $\pm$ 686.79 & 3025.97 $\pm$ 480.88 & 4567.03 $\pm$ 477.81 & 281.61 $\pm$ 4.58 & 48.76 $\pm$ 2.23 & 3609.33 $\pm$ 516.24 \\ \\
    
    TD3 + LA3P & \textbf{5197.46 $\pm$ 377.34} & \textbf{321.17 $\pm$ 7.03} & \textbf{11225.14 $\pm$ 811.57} & \textbf{3563.00 $\pm$ 279.32} & \textbf{5131.11 $\pm$ 250.77} & \textbf{276.60 $\pm$ 15.13} & \textbf{99.33 $\pm$ 25.04} & \textbf{4776.68 $\pm$ 424.46} \\
    TD3 + LAP & 4653.16 $\pm$ 701.20 & 293.68 $\pm$ 45.51 & 10052.97 $\pm$ 1056.09 & 3145.94 $\pm$ 597.24 & 4998.46 $\pm$ 279.67 & 274.22 $\pm$ 10.36 & 71.33 $\pm$ 20.76 & 3700.36 $\pm$ 766.17 \\
    TD3 + MaPER & 4161.24 $\pm$ 237.26 & 306.43 $\pm$ 27.86 & 8975.32 $\pm$ 604.96 & 3027.40 $\pm$ 278.29 & 4938.53 $\pm$ 140.22 & 267.60 $\pm$ 9.52 & 61.80 $\pm$ 7.46 & 3410.70 $\pm$ 341.82 \\
    TD3 + PER & 4103.93 $\pm$ 287.86 & 275.49 $\pm$ 40.91 & 7035.20 $\pm$ 984.17 & 2802.30 $\pm$ 277.97 & 4916.40 $\pm$ 112.79 & 259.78 $\pm$ 13.93 & 56.56 $\pm$ 4.52 & 3221.13 $\pm$ 420.88 \\
    TD3 + Uniform & 4029.16 $\pm$ 576.29 & 277.14 $\pm$ 74.78 & 7824.56 $\pm$ 1091.80 & 2857.00 $\pm$ 584.39 & 4802.12 $\pm$ 310.07 & 274.95 $\pm$ 5.70 & 48.49 $\pm$ 1.30 & 3312.46 $\pm$ 832.18 \\
    \bottomrule
    \end{tabular}
    }
    \caption{Average return of last 10 evaluations over 10 trials
    of 1 million time steps. $\pm$ captures a 95\% confidence interval over the trials. Bold values represent the maximum under each baseline algorithm and environment.}
    \label{tab:eval}
\end{sidewaystable*}

From our evaluation results, we observe that LA3P matches or outperforms the competing approaches in all tasks and baseline off-policy actor-critic algorithms tested. However, for the BipedalWalker and LunarLanderContinuous environments under the SAC algorithm, we find that LAP obtains slightly larger rewards than our method. Nevertheless, these scores can be practically counted as the same, and as the learning curves show, LA3P could attain faster convergence. In trivial environments such as LunarLanderContinuous, a minor improvement is achieved. However, in the Swimmer environment where no algorithm could converge, the performance improvement offered by our modifications becomes more prominent. Furthermore, we also observe a substantial improvement by LA3P in the HalfCheetah environment over the prior approaches. As HalfCheetah and Swimmer are regarded as ``stable'', i.e., episodes terminate only if the pre-specified number of time steps is reached, they require long horizons to be simulated. Therefore, as stated by \cite{lap}, the benefit of a corrected prioritization scheme is more observable in environments with longer horizons. 

Additionally, we confirm previous empirical studies, e.g., \citep{lap,maper}, which found that PER provides no benefits when added to off-policy continuous control algorithms, and performance is usually degraded. While this is attributed to the use of MSE by \cite{lap}, prioritization with corrected loss function, i.e., LAP, appears to have little impact in Ant, Hopper, and Walker2d compared to LA3P. In fact, the SAC algorithm is underperformed in Ant and Hopper. This result is consistent with our theoretical analysis made in Corollary \ref{cor:diverge_grad}, which shows that optimizing the actor network with transitions corresponding to large TD errors can cause the approximate policy gradient to diverge from the one computed under the optimal Q-function. In these environments, learning curves demonstrate that the performance gain offered by LA3P primarily comes from the inverse prioritized sampling for the actor network. This suggests that the inverse sampling in the LA3P framework plays a more significant role than the employed LAP and PAL functions. Hence, a combined solution, inverse sampling with corrected loss functions, is superior.

Finally, we notice that the performance of MaPER is not promising as it acquires slight improvements over PER. We attribute such a poor performance mainly to the model prediction structure of the algorithm. As we previously discussed, the MaPER algorithm focuses on new learnable features driven by the components in model-based RL to calculate the scores on experiences since critic networks often under- or overestimate Q-values. However, the Clipped Double Q-learning algorithm proposed by \cite{td3} already solves the issues with inaccurate Q-value estimates, which is already employed in SAC and TD3. Therefore, we conclude that the main drawback of PER is not the inaccurate Q-value estimates used in the priority calculations but the biased loss function and training the actor network with large TD error transitions. In addition, the model prediction module in MaPER decreases the convergence rate yet, brings notable stability to the learning. Nonetheless, the resulting performance is not considerable. Consequently, we believe that LA3P is a preferable and comprehensive way of overcoming the underlying issues of PER in continuous action domains. 

\subsection{Ablation Studies}
To better understand the contribution of each component in LA3P, we conduct an ablation study. The LA3P algorithm introduces several modifications to PER. In summary, LA3P consists of: (a) inverse sampling for the actor, (b) uniform sampling for the actor and critic networks to share a set of transitions, (c) the LAP function applied to the prioritized transitions, (d) the PAL function applied to the uniformly sampled transitions.

We evaluate and discuss the resulting performances when removing each of these components. In addition, we test the performance of LA3P when the shared transitions are low TD error experiences instead of uniformly sampled ones to demonstrate the mentioned decreased data efficiency. Finally, we perform a sensitivity analysis for the $\lambda$ parameter. We do not remove the inverse sampling as it is the backbone of our algorithm, that is, eliminating the inverse sampling for the actor network would not relate to any of the modifications introduced by this work as it would be just a mixture of uniform and prioritized sampling. Moreover, we choose four challenging environments with different characteristics for a comprehensive inference. As described by \cite{deep_rl_that_matters} and \cite{lap}, we consider the stable environment of HalfCheetah, the unstable environment of Walker2d, and the high dimensional and most challenging environments of Ant and Humanoid. 

Table \ref{table:ablation} presents the average of the last ten evaluation returns over ten trials, i.e., the level where the tested settings converge, for our ablation studies and sensitivity analysis, and the corresponding learning curves are depicted by Figure \ref{fig:ablation} and \ref{fig:fraction}, respectively. The same experimental setup is used to perform the ablation studies, and $\lambda = 0.5$ is used for all experiments unless otherwise stated. Note that $\lambda = 0.0$ yields the LA3P setting without shared set of transitions and the evaluation results of which are already provided in Table \ref{table:ablation} and Figure \ref{fig:ablation}. In addition, $\lambda = 1.0$ corresponds to uniform sampling, which we already compare against LA3P in Figure \ref{fig:sac_results}, \ref{fig:td3_results}, and Table \ref{tab:eval}.  

\begin{table}[!hbt]
\begin{center}
    \begin{tabular}{l cccc}
        \toprule
        \textbf{Setting} & \textbf{Ant} & \textbf{HalfCheetah} & \textbf{Humanoid} & \textbf{Walker2d} \\
        \midrule
        LA3P (complete) & \textbf{5197.46 $\pm$ 377.34} & \textbf{11225.14 $\pm$ 811.57} & \textbf{5131.11 $\pm$ 250.77} & \textbf{4776.68 $\pm$ 424.46} \\
        w/ low TD error & 3485.06 $\pm$ 946.02 & 10992.05 $\pm$ 485.02 & 3938.13 $\pm$ 1418.13 & 4438.29 $\pm$ 398.53 \\
        w/o LAP & 3408.55 $\pm$ 596.73 & 4580.27 $\pm$ 257.91 & 3585.06 $\pm$ 950.30 & 3262.49 $\pm$ 388.56 \\
        w/o PAL & 3975.29 $\pm$ 1207.92 & 7560.50 $\pm$ 774.59 & 4879.43 $\pm$ 352.00 & 4543.92 $\pm$ 528.66 \\
        w/o shared transitions & 4431.87 $\pm$ 787.78 & 10483.11 $\pm$ 600.52 & 5058.69 $\pm$ 183.93 & 4254.55 $\pm$ 390.57 \\ \\
        
        $\lambda = 0.1$ & 3768.74 $\pm$ 1079.59 & 11203.83 $\pm$ 576.07 & 4695.52 $\pm$ 338.39 & 4562.77 $\pm$ 405.97 \\
        $\lambda = 0.3$ & 4903.34 $\pm$ 467.54 & 10460.52 $\pm$ 948.60 & 4528.28 $\pm$ 1148.43 & 4425.08 $\pm$ 448.86 \\
        $\lambda = 0.5$ & \textbf{5197.46 $\pm$ 377.34} & \textbf{11225.14 $\pm$ 811.57} & \textbf{5131.11 $\pm$ 250.77} & \textbf{4776.68 $\pm$ 424.46} \\
        $\lambda = 0.7$ & 4383.97 $\pm$ 924.90 & 10656.92 $\pm$ 750.93 & 4874.50 $\pm$ 267.60 & 4253.76 $\pm$ 904.98 \\
        $\lambda = 0.9$ & 3882.08 $\pm$ 1157.63 & 10547.85 $\pm$ 890.96 & 3757.90 $\pm$ 1403.43 & 4545.97 $\pm$ 593.65 \\
        \bottomrule
    \end{tabular}
\end{center}
\caption{Average return over the last 10 evaluations over 10 trials
of 1 million time steps, comparing ablation over LA3P under low TD error shared transitions, LA3P without the LAP function, LA3P without the PAL function, LA3P without the shared set of transitions, and LA3P under $\lambda = \{0.1, 0.3, 0.5, 0.7, 0.9\}$. $\pm$ captures a 95\% confidence interval over the trials. Bold values represent the maximum under each environment.}
\label{table:ablation}
\end{table}

\begin{figure*}[!hbt]
    \centering
    \begin{align*}
        &\text{{\orange} LA3P (complete)}  &&\text{{\blue} LA3P w/ low TD error shared transitions} &&\text{{\brown} LA3P w/o LAP} \\
        &\text{{\green} LA3P w/o PAL} &&\text{{\pink} LA3P w/o shared transitions}
    \end{align*}
	\subfigure{
		\includegraphics[width=1.55in, keepaspectratio]{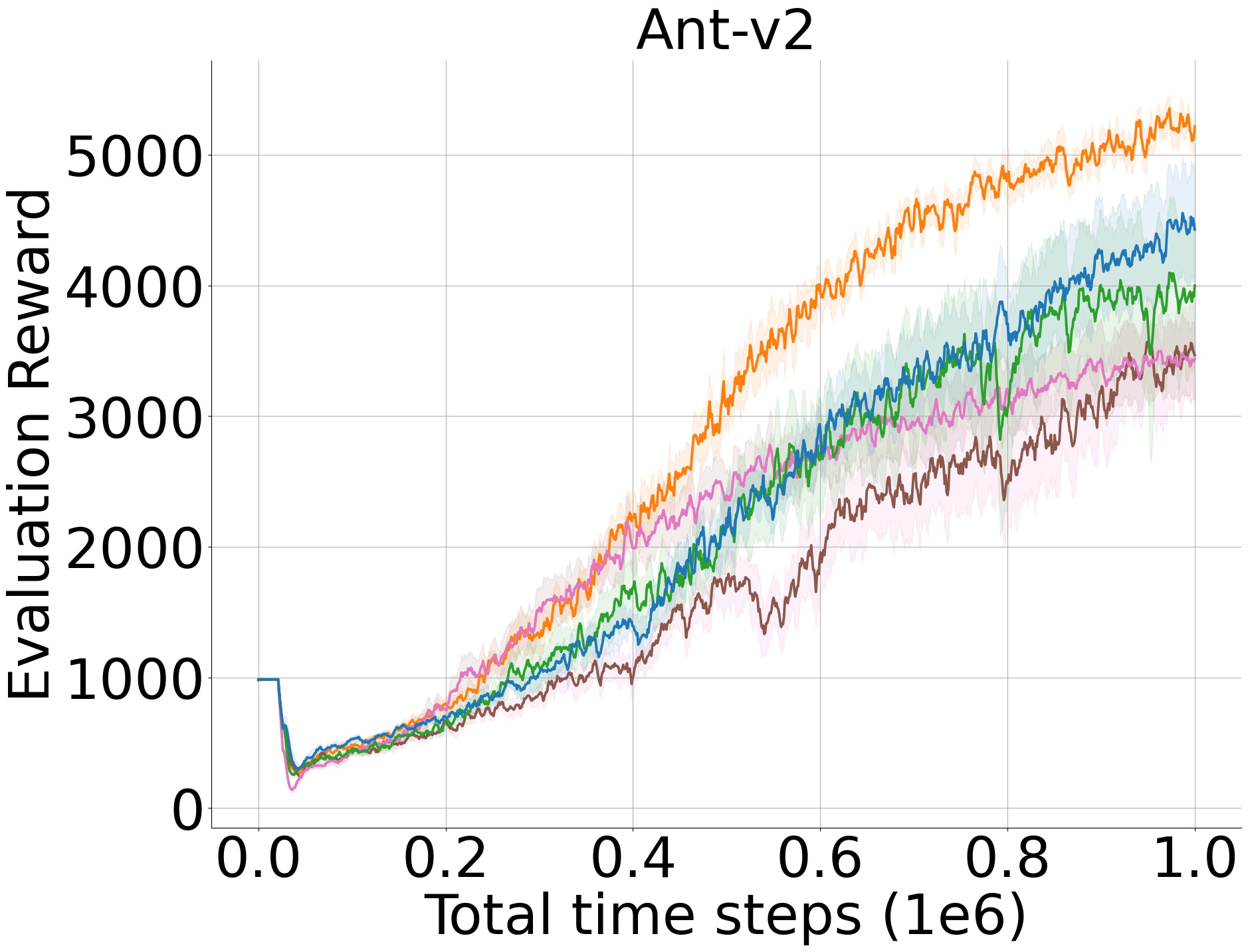}
		\includegraphics[width=1.55in, keepaspectratio]{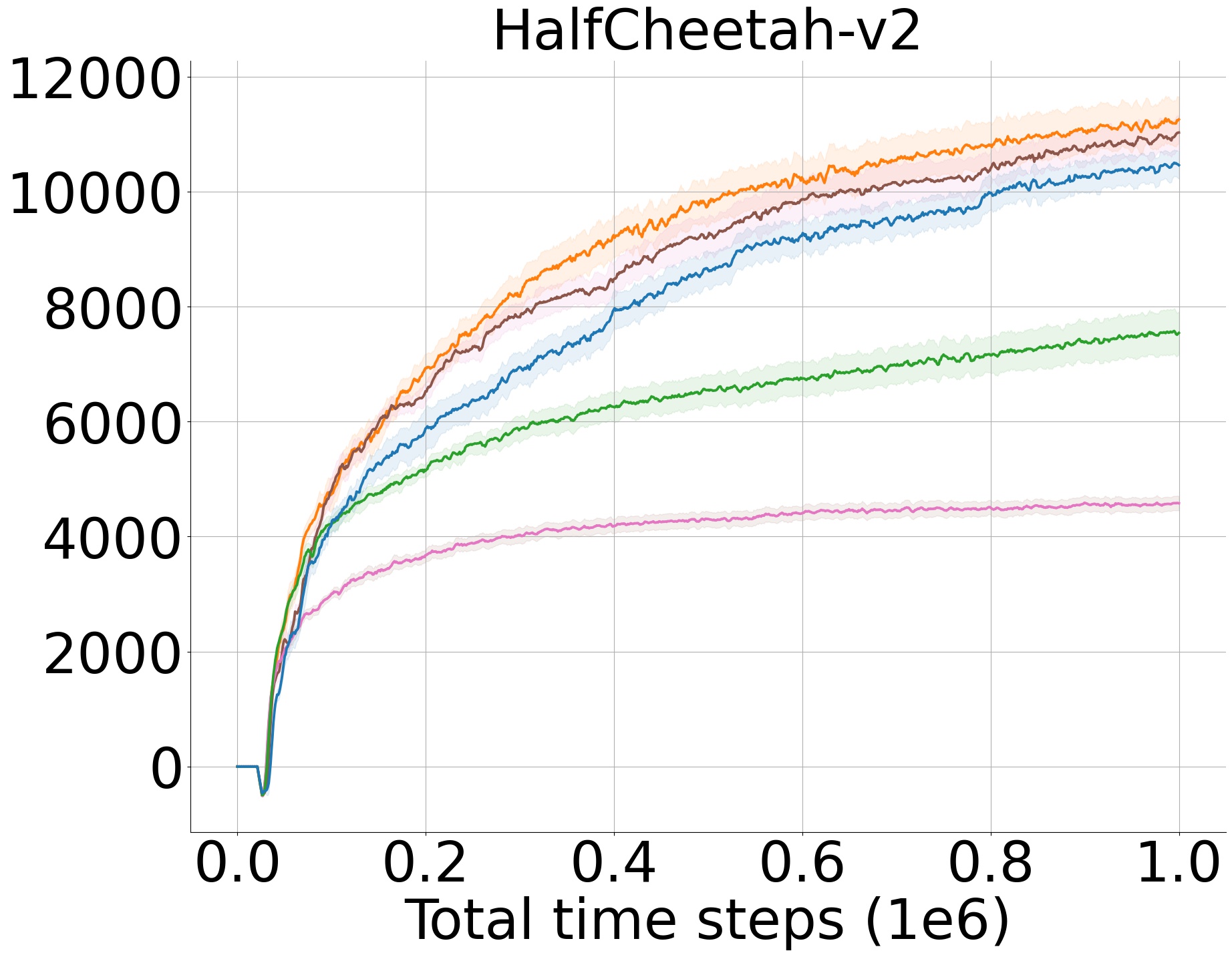}
		\includegraphics[width=1.55in, keepaspectratio]{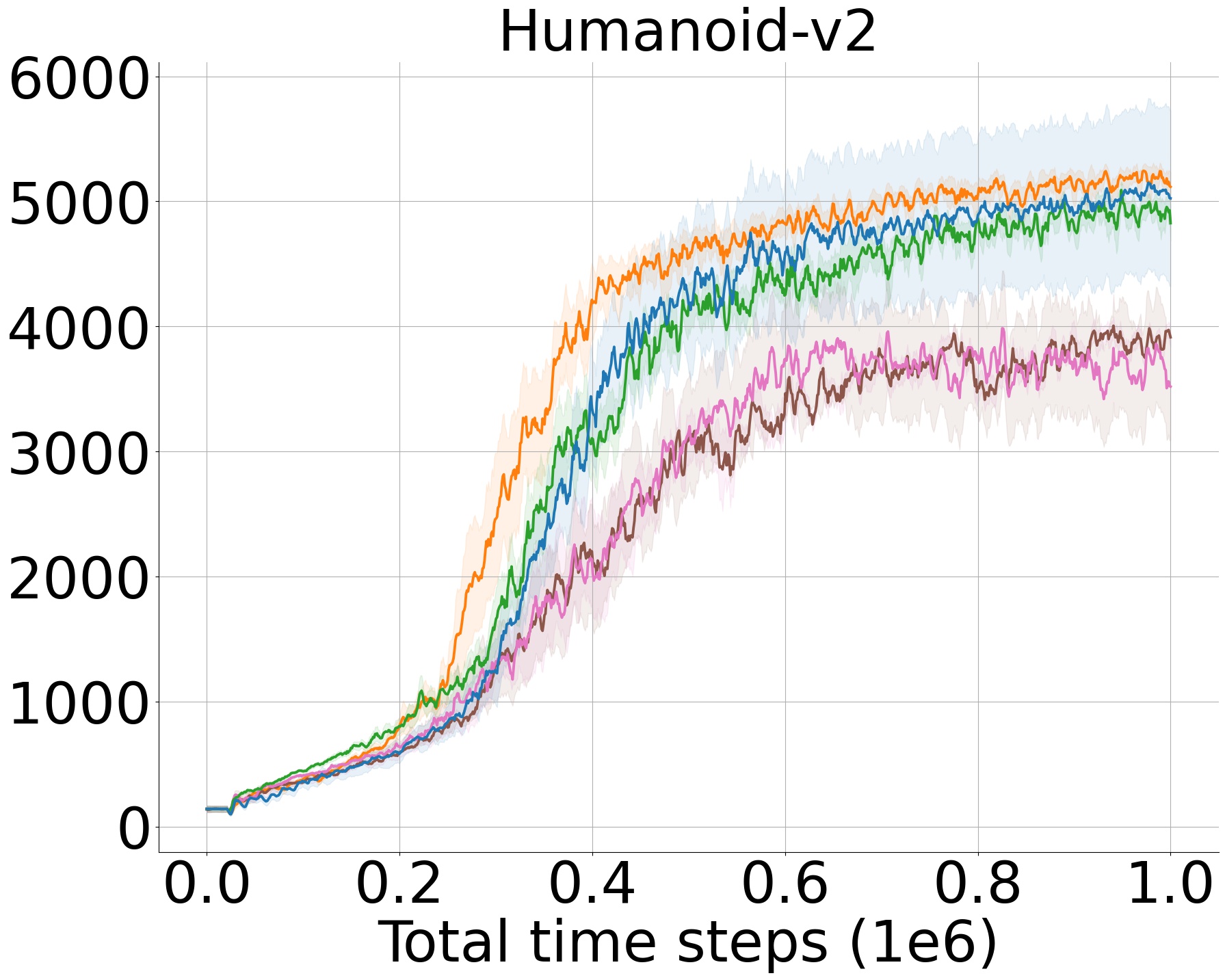}
		\includegraphics[width=1.55in, keepaspectratio]{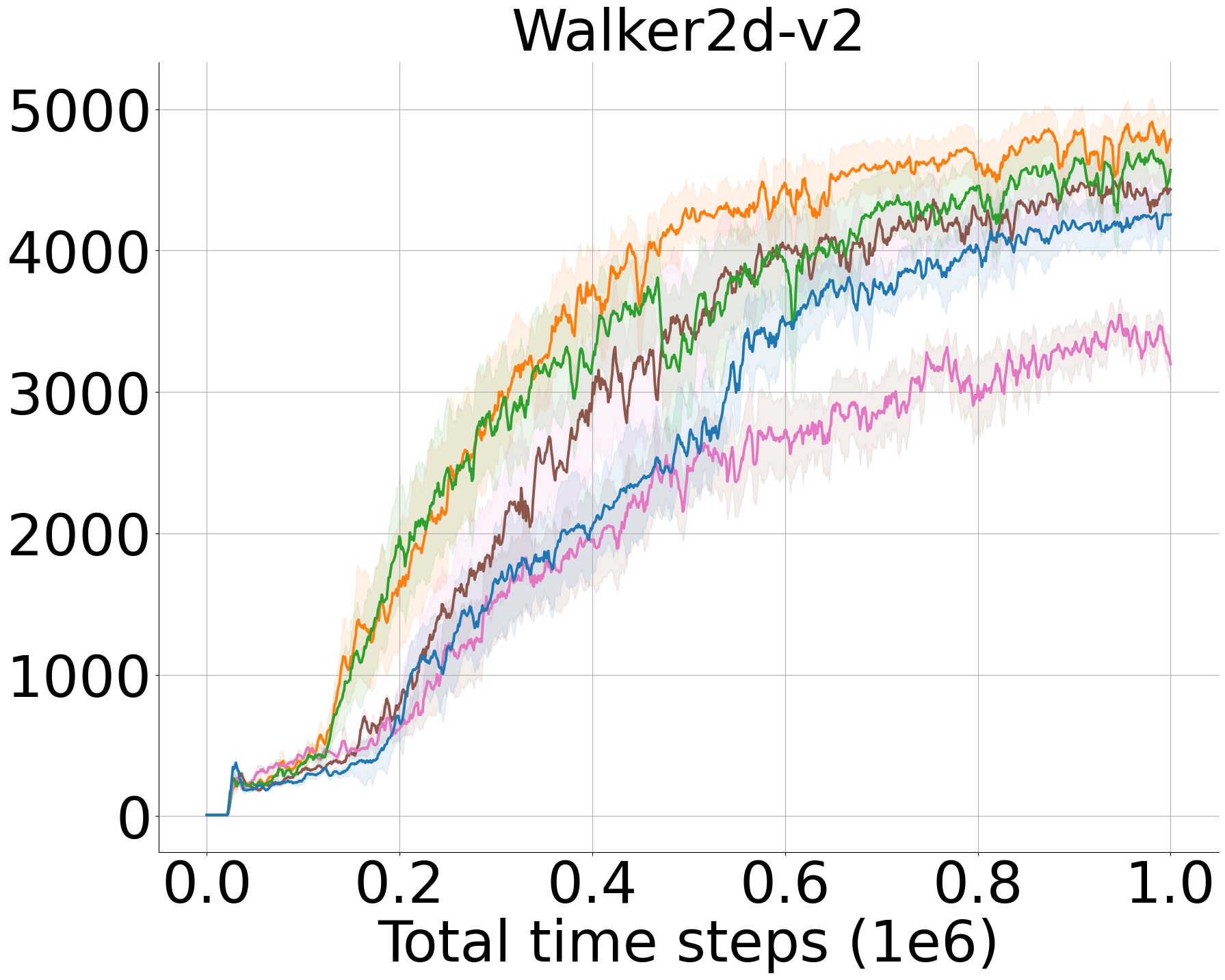}
	}
    \caption{Learning curves for the selected MuJoCo continuous control tasks, comparing ablation of LA3P under low TD error shared transitions, LA3P without the LAP function, LA3P without the PAL function, and LA3P without the shared set of transitions. Note that the TD3 algorithm is used as the baseline off-policy actor-critic algorithm. The shaded region represents a 95\% confidence interval over the trials. A sliding window of size 5 smoothes the curves for visual clarity.}
	\label{fig:ablation}
\end{figure*}

\begin{figure*}[!hbt]
    \centering
    \begin{align*}
        &\text{{\blue} $\lambda = 0.1$}  &&\text{{\brown} $\lambda = 0.3$} &&\text{{\orange} $\lambda = 0.5$} \\
        &\text{{\green} $\lambda = 0.7$} &&\text{{\pink} $\lambda = 0.9$}
    \end{align*}
	\subfigure{
		\includegraphics[width=1.55in, keepaspectratio]{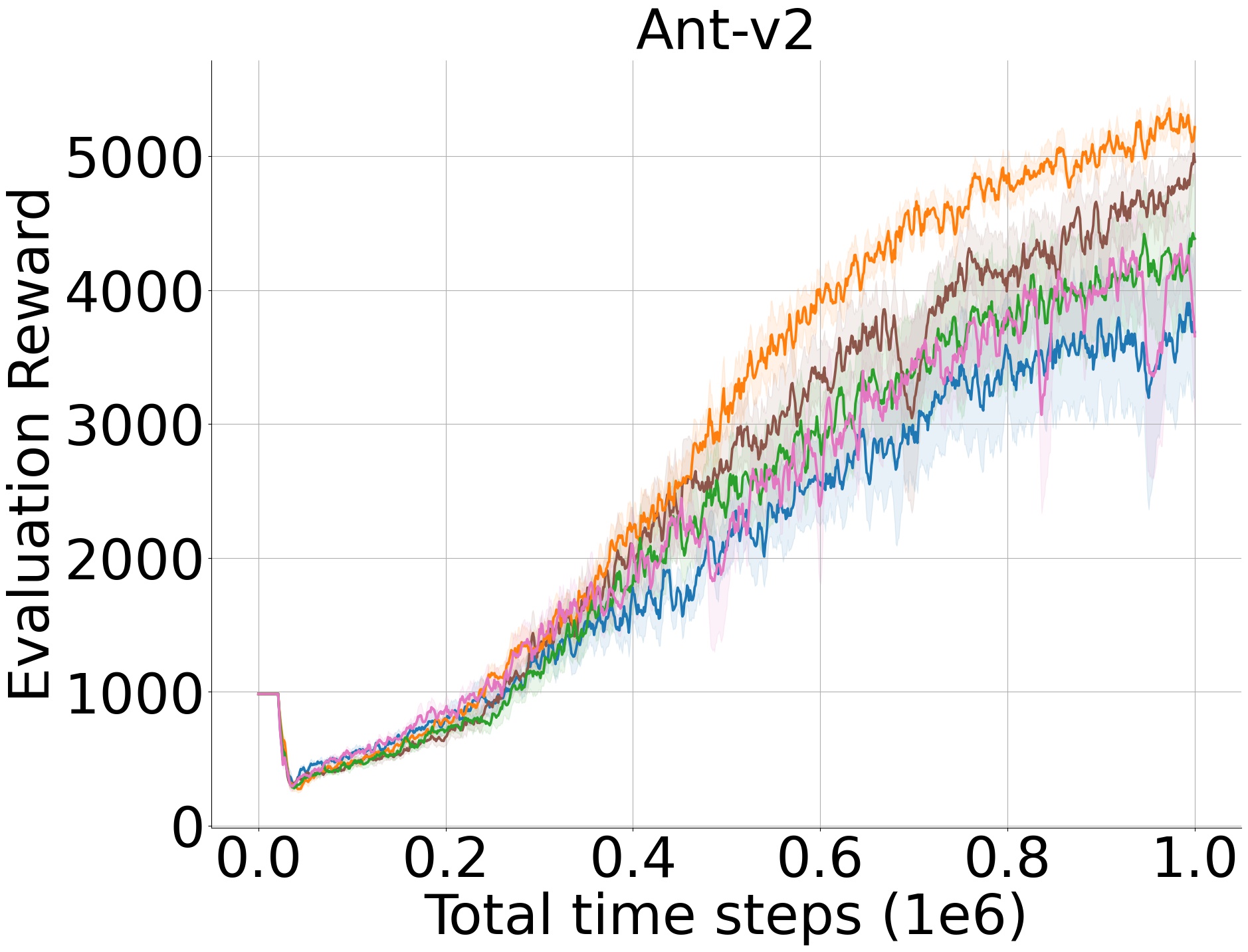}
		\includegraphics[width=1.55in, keepaspectratio]{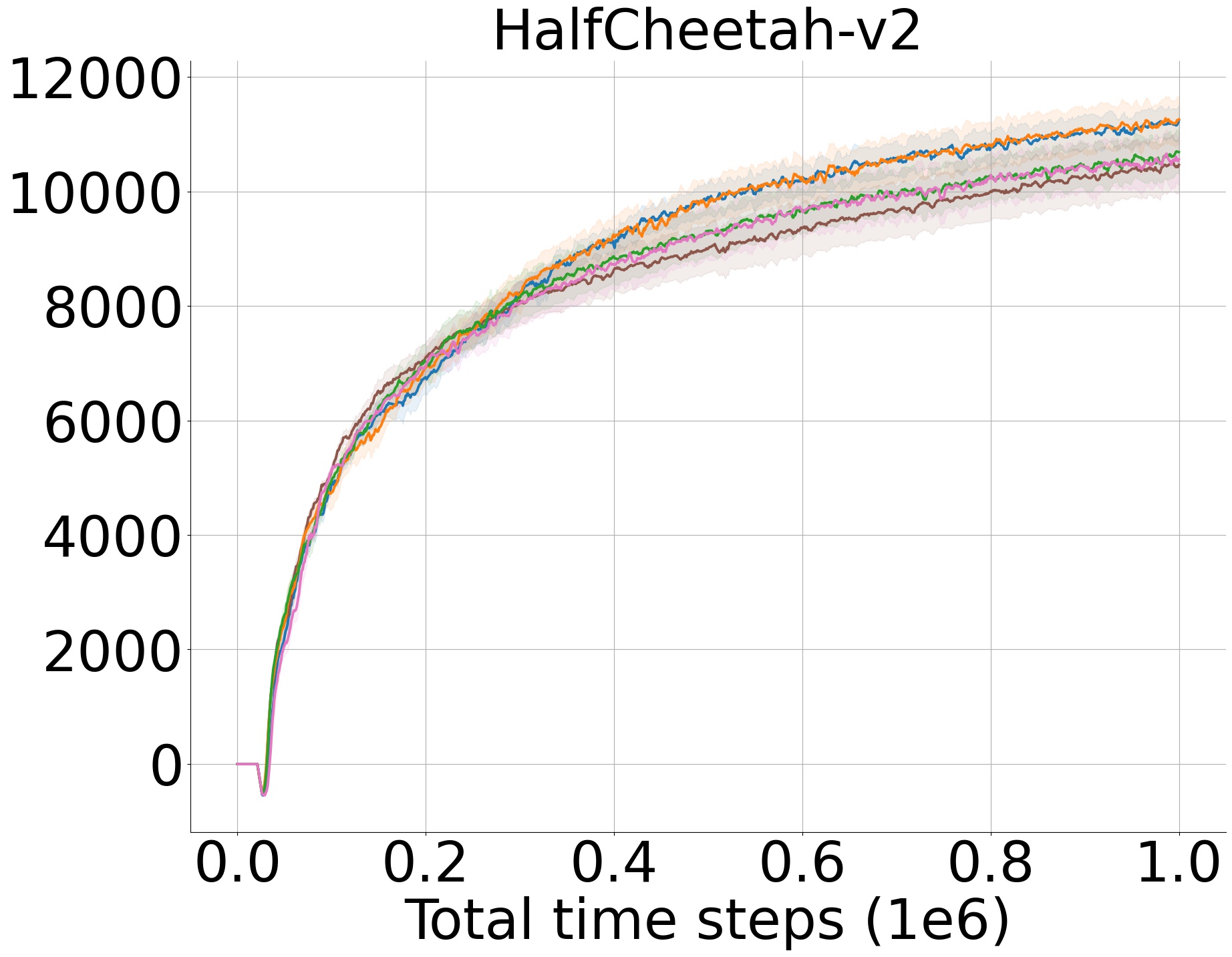}
		\includegraphics[width=1.55in, keepaspectratio]{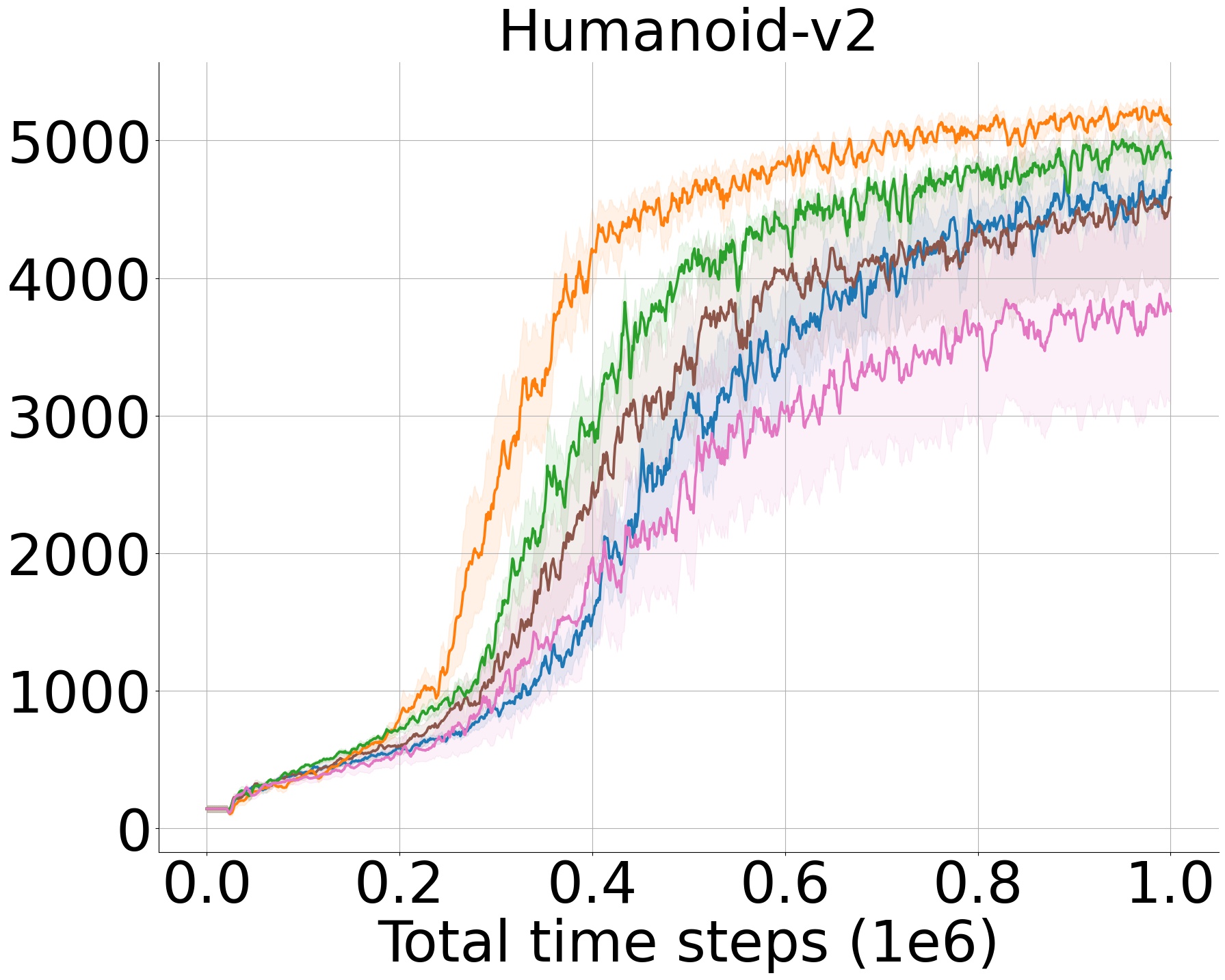}
		\includegraphics[width=1.55in, keepaspectratio]{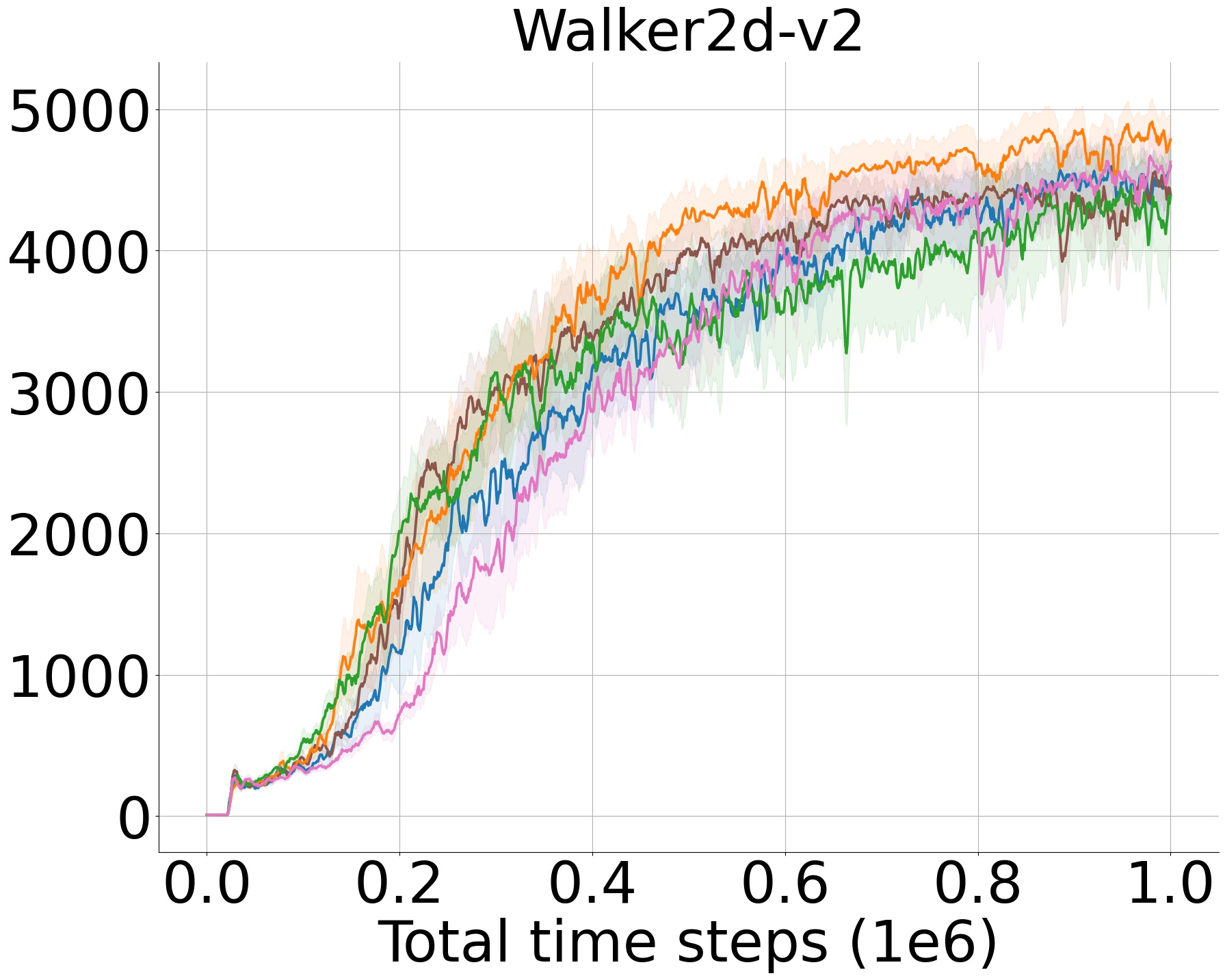}
	}
    \caption{Learning curves for the selected MuJoCo continuous control tasks, analyzing the sensitivity of LA3P with respect to $\lambda = \{0.1, 0.3, 0.5, 0.7, 0.9\}$. Note that the TD3 algorithm is used as the baseline off-policy actor-critic algorithm. The shaded region represents a 95\% confidence interval over the trials. A sliding window of size 5 smoothes the curves for visual clarity.}
	\label{fig:fraction}
\end{figure*}

First, we deduce that the set of shared transitions is the most crucial component of our framework. Independently training the actor and critic networks violate their correlation as the actor is optimized by maximizing the Q-values estimated by the Q-network, and the Q-network is trained using the actions selected by the actor. Thus, they should not be separated in training, and we empirically verify Observation \ref{cor:uniform}. Although the LAP and PAL functions apply the same number of transitions in each update step, i.e., $\lambda = 0.5$, we observe that the contribution of LAP is more significant than PAL. As discussed in our comparative evaluations, the performance improvement by LA3P largely relies on inverse sampling for the actor network. As expected, correcting the prioritization in inverse sampling for the actor network through the LAP approach, i.e., (\ref{inverse_eq_1}), is more crucial than correcting the loss by PAL for the uniformly sampled batch. Lastly, we infer that using low TD error transitions instead of uniformly sampled ones substantially degrades the performance. Although this setting would seem to be a reasonable choice at first glance, the data efficiency considerably decreases as the Q-network repeatedly trains with transitions that it has already learned well. In the expectation, the uniformly sampled batch of transitions corresponds to an intermediate TD error value compared to the transitions contained in the entire replay buffer. As we experimentally show, this may benefit both the actor and critic networks since inverse prioritized and prioritized sampling may not include transitions with intermediate TD error values compared to the rest of the experiences.

Our sensitivity analysis on the $\lambda$ parameter suggests that $\lambda = 0.5$ produces the best results by a notable margin in all environments. As $\lambda$ decreases, the correlation between the actor and critic networks starts to be ignored, and the performance drops. In contrast, the larger $\lambda$ values yield the performance to converge to that of uniform sampling. Hence, we believe the introduced framework does not require intensive hyper-parameter tuning, and $\lambda = 0.5$ can apply to many tasks. Overall, it is shown by our ablation studies that our framework improves over the baseline actor-critic algorithms due to the structure of the introduced method rather than unintended consequences or any exhaustive hyper-parameter tuning.

\section{Conclusion}
In this paper, we build the theoretical foundations behind the poor empirical performance of a widely known experience replay sampling scheme, Prioritized Experience Replay (PER) \citep{per}, for controlling continuous systems. To achieve this, we first show that transition tuples may exist so that the corresponding temporal-difference (TD) errors can increase the Q-value estimation error associated with the current or subsequent transition tuples. We use this finding to further indicate that training actor networks with large TD errors may cause the approximate policy gradient computed under the Q-network to diverge from the one computed under the optimal Q-function. This result suggests that even if the biased loss function in the PER algorithm is corrected, optimizing the actor network with low TD error transitions and Q-network with large TD error transitions can significantly increase the performance. This enables us to comprehend PER's poor performance in more detail when applied to continuous control algorithms.

However, training actor and critic networks with different transitions throughout the learning violates the actor-critic theory since each of them is optimized with respect to the other, i.e., the actor tries to maximize the Q-value estimated by the Q-network and Q-network computes its loss based on the actions selected by the actor. This allows us to develop a novel framework, Loss Adjusted Approximate Actor Prioritized Experience Replay (LA3P), which mixes training with uniformly sampled, low, and high TD error transitions. The introduced approach also accounts for the previous findings of \cite{lap}, which practically eliminate the outlier bias introduced by the combination of mean-squared error with PER. We test LA3P on standard deep reinforcement learning benchmarks in MuJoCo and Box2D and demonstrate that it substantially outperforms the competing methods and improves the state-of-the-art. An extensive set of ablation studies further implies that each LA3P component significantly impacts the offered performance improvement, and inverse prioritized sampling with corrected loss functions can increase the performance to the maximum. Therefore, we firmly believe that issues with PER in continuous control are corrected by the presented modifications supported by a comprehensive theoretical analysis. Finally, the source code for our algorithm is publicly available at our GitHub repository\footref{our_repo} for easy reproducibility and to support further research in non-uniform sampling methods.

\bibliographystyle{unsrtnat}
\bibliography{references}

\clearpage

\begin{appendices}
\section{Summary of the LA3P Framework}
\begin{figure}[htb]
\label{fig:simple_la3p}
        \centering
        
            \centering
            \begin{tabular}{c}
\begin{lstlisting}
uniform sampling
critic training with PAL
priority update
actor training
prioritized sampling
critic training with LAP
priority update
inverse prioritized sampling
actor training
\end{lstlisting}
             \end{tabular}
        \caption{A simplified depiction of the cascaded LA3P framework. Operations run consecutively.}
\end{figure}

\section{Experimental Details}
\label{app:exp_details}

\subsection{Architecture and Hyper-parameter Setting}
\subsubsection{Architecture}
The actor-critic methods, TD3 and SAC, employ two Q-networks and a single actor network. All networks feature two hidden layers having 256 hidden units, with ReLU activation functions after each. Following a final linear layer, the critic networks take state-action pairs $(s, a)$ as input and output a scalar value Q. The actor network takes state $s$ as input and produces a multi-dimensional action $a$ by applying a linear layer with a tanh activation function multiplied by the action space scale.

\subsubsection{Network Hyper-parameters}
The Adam optimizer \citep{adam} is used to train the networks, with a learning rate of $3 \times 10^{-4}$ and a mini-batch size of 256. After each update step, the target networks in both TD3 and SAC are updated using polyak averaging with $\zeta = 0.005$, resulting in $\theta^{\prime} \leftarrow 0.995 \cdot \theta^{\prime} + 0.005 \cdot \theta$.

\subsubsection{Terminal Transitions}
In setting the target Q-value, we utilize a discount factor of $\gamma = 0.99$ for non-terminal transitions and zero for terminal transitions. A transition is deemed terminal only if it stops due to a termination condition, i.e., failure or exceeding the time limit.

\subsubsection{Actor-Critic Algorithms}
We use the default policy noise of $\mathcal{N}(0, \sigma_{N})$ for the TD3 algorithm, as suggested by the author, where it is clipped to $[0.5, 0.5]$ with $\sigma_{N} = 0.2$. The range of the action space is used to scale both values. With SAC, we utilize the learned entropy variant, in which entropy is optimized to an objective of $-\texttt{action dimensions}$ using an Adam optimizer with a learning rate of $3 \times 10^{-4}$, similar to the actor and critic networks. To avoid numerical instability in the logarithm operation, we cut the log standard deviation to $(20, 2)$, and a small constant of $10^{-6}$ is added, as designated by the author. 

\subsubsection{Prioritized Sampling Algorithms}
As described by \cite{per}, we use $\alpha = 0.6$ and $\beta = 0.4$ for PER. As LAP and PAL functions are employed in our algorithm, we directly use $\alpha = 0.4$ and $\beta = 0.4$. No hyper-parameter optimization was performed on the $\alpha$ and $\beta$ parameters since the used values produce the best results, as reported by \cite{lap}. 

Since SAC and TD3 maintain two Q-networks, there are two TD errors defined by $\delta_{1} = y - Q_{\theta_{1}}$ and $\delta_{2} = y - Q_{\theta_{2}}$, where $y$ was previously defined in (\ref{eq:q_target}). Each priority considers the maximum of $\lvert\delta_{1}\rvert$ and $\lvert\delta_{2}\rvert$, as described by \cite{lap} to produce the strongest performance. New samples are assigned a priority equal to the highest priority $p_{\text{init}} = 1$ recorded at any time during learning, as done by PER.

Applications of LA3P to SAC and TD3 do not differ in terms of implementation and algorithmic setup. The main differences between the actor-critic algorithms of SAC and TD3 are the computation of the policy gradient, entropy tuning, and the presence of the target actor network. As discussed previously, Theorem \ref{thm:diverge_grad} is valid for deterministic and stochastic policies. Therefore, algorithmic differences between SAC and TD3 do not regard the implementation and operation of LA3P. 

\subsubsection{Exploration}
To fill the buffer, the agent is not trained for the first 25000 time steps, and actions are chosen randomly with uniform probability. After that, TD3 explores the action space by introducing a Gaussian noise of $\mathcal{N}(0, \sigma_{E}^{2} \cdot \texttt{max action size})$, where $\sigma_{E} = 0.1$ is scaled by the action space range. As SAC employs a stochastic policy, no exploration noise is added.

\subsubsection{Hyper-parameter Optimization}
No hyper-parameter optimization was performed on any algorithm except for SAC. Having the remaining parameters fixed, we optimized the reward scale for the BipedalWalker, LunarLanderContinuous, and Swimmer tasks, as they were not reported in the paper. We tested the values of $\{5, 10, 20\}$, and it turned out that scaling the rewards by 5 produced the best results for these environments. 

All algorithms follow what is reported in the original papers or the most recent code in the respective GitHub repositories. SAC follows the precise hyper-parameter setting outlined in the original paper except for increased exploration time steps to 25000 and entropy tuning. For TD3, as we employed the code from the author's repository\footref{td3_repo}, the parameter setting has a minor difference. Different from the original paper, the code in the repository increases the number of start steps to 25000 and batch size to 256 for all environments, as reported to produce better results. 

For LA3P, we tested $\lambda = \{0.1, 0.3, 0.5, 0.7, 0.9\}$ on the Ant, Hopper, Humanoid, and Walker2d tasks, and found that $\lambda = 0.5$ exhibited the best results. We provided the results under different $\lambda$ values in our ablation studies in Section \ref{sec:ab_stud}. For clarity, all hyper-parameters are presented in Table \ref{tab:hyper_params}.

\begin{table*}[!b]
\begin{center}
    \begin{tabular}{l c}
        \toprule
        \textbf{Hyper-parameter} & \textbf{Value} \\
        \midrule
        Optimizer & Adam \\
        Learning rate & $3 \times 10^{-4}$ \\
        Mini-batch size & 256 \\
        Discount factor $\gamma$ & 0.99 \\
        Target update rate & 0.005 \\
        Initial exploration steps & 25000 \\
        \midrule
        TD3 exploration policy $\sigma_{E}$ & 0.1 \\ 
        TD3 policy noise $\sigma_{N}$ & 0.2 \\ 
        TD3 policy noise clipping & $(-0.5, 0.5)$ \\ 
        \midrule
        SAC entropy target & \texttt{-action dimensions} \\
        SAC log-standard deviation clipping & $(-20, 2)$ \\
        SAC log constant & $10^{-6}$ \\ 
        SAC reward scale (except Humanoid) & 5 \\
        SAC reward scale (Humanoid) & 20 \\
        \midrule
        PER priority exponent $\alpha$ & 0.6 \\
        PER importance sampling exponent $\beta$ & 0.4 \\
        PER added priority constant & $10^{-4}$ \\ 
        LAP \& PAL exponent $\alpha$ & 0.4 \\
        LA3P uniform fraction $\lambda$ & 0.5 \\
        \bottomrule
    \end{tabular}
\end{center}
\caption{Hyper-parameters used in the experiments.}
\label{tab:hyper_params}
\end{table*}

\subsection{Implementation}
TD3 is implemented using the author's GitHub repository\footref{td3_repo}, while we utilize the code from the same author's LAP-PAL repository\footref{lap_repo} to implement PER and LAP-PAL functions. The PER implementation is based on proportional prioritization through a sum tree data structure. We manually implement SAC by following the original paper and adding entropy tuning the same authors introduced in \citep{sac_applications}. Lastly, We directly utilize the MaPER code from the paper's submission files from the OpenReview website\footref{maper_repo}. No changes were made to the MaPER code.

The implementation of LA3P consists of the cascaded uniform, prioritized, and inverse prioritized sampling, which precisely follows the pseudocode in Algorithm \ref{alg:la3p}. We do not update the priorities after the actor update with inverse prioritized sampling since the PER implementation with standard actor-critic algorithms only considers the priority update after each critic update. 

\subsection{Experimental Setup}
\subsubsection{Simulation Environments}
All agents are evaluated in continuous control benchmarks of MuJoCo\footnote{\url{https://mujoco.org/}} and Box2D\footnote{\url{https://box2d.org/}} physics engines interfaced by OpenAI Gym\footnote{\url{https://www.gymlibrary.ml/}}, using v2 environments. The environment, state-action spaces, and reward function are not altered or pre-processed for practical reproducibility and fair comparison with empirical findings. Each environment has a multi-dimensional action space with values ranging between $[1, 1]$, excluding Humanoid, which has a range of $[0.4, 0.4]$. 

\subsubsection{Evaluation}
Every 1000 time steps, an evaluation is performed, each being the average reward over 10 episodes, using the deterministic policy from TD3 without exploration noise or the deterministic mean action from SAC. We employ a new environment with a fixed seed (the training seed + a constant) for each evaluation to decrease the variation caused by varying seeds \citep{deep_rl_that_matters}, so each evaluation utilizes the same set of initial start states.

\subsubsection{Visualization of the Learning Curves}
Learning curves indicate performance and are depicted as an average of 10 trials with a shaded region denoting a 95\% confidence interval over the trials. The curves are flattened equally throughout a sliding window of 5 evaluations for visual clarity.

\section{Empirical Complexity Analysis}
\label{sec:emp_comp}
Upon completing our evaluation simulations, we compare the run time of baseline uniform sampling, PER, and our algorithm. Each sampling method combines the off-policy actor-critic algorithms, SAC and TD3. We record the total run time of each method throughout our comparative evaluation experiments. All experiments are run on a single GeForce RTX 2070 SUPER GPU and an AMD Ryzen 7 3700X 8-Core Processor. Our results are presented in Table \ref{tab:emp_comp}. 

\begin{table*}[!bh]
\begin{center}
    \begin{tabular}{@{} cl*{3}c @{}}
        \toprule
        & \textbf{Result} & \textbf{Uniform} & \textbf{PER} & \textbf{LA3P} \\
        \midrule
        & Run Time (mins) & 225.18 $\pm$ 1.48 & 307.01 $\pm$ 1.52 & 445.42 $\pm$ 1.52 \\
        \rot{\rlap{SAC}}
        & Time Increase (\%) & +0.00\% & +136.34\% & +197.81\% \\
        \midrule
        & Run Time (mins) & 131.51 $\pm$ 1.98 & 145.83 $\pm$ 2.09 & 238.29 $\pm$ 2.13 \\
        \rot{\rlap{TD3}}
        & Time Increase (\%) & +0.00\% & +110.89\% & +181.19\% \\
        \bottomrule
    \end{tabular}
\end{center}
\caption{Average run time of uniform sampling, PER, and LA3P, and their percentage increase over the off-policy actor-critic algorithms, SAC and TD3. Values are recorded over 1 million time steps and averaged over 10 random seeds and benchmark environments selected for our comparative evaluations. $\pm$ captures a 95\% confidence interval over the run time.
}
\label{tab:emp_comp}
\end{table*}

First, we find that the run time of SAC is greater than that of TD3. This is due to the additional entropy tuning that requires backpropagation and maintaining a stochastic actor. Moreover, the high dimensional environments such as Ant and Humanoid significantly increase the mean run time of the algorithms. While PER has a slightly increased run time, our method considerably increases the required time for each experiment. Although this result may question the feasibility of our approach, the empirical run time is vastly lower than what is indicated by our theoretical analysis. We know that $\mathcal{O}(\lvert R \rvert)$ is much larger than $\mathcal{O}(\log \lvert R \rvert)$, mainly when the replay buffer is large. Nonetheless, this can be overcome by the discussed parallelable array division operation embedded through SIMD operations in the recently introduced CPUs.

\end{appendices}

\end{document}